\newtheorem{theorem}{Theorem}
\newtheorem{corollary}{Corollary}[theorem]
\newtheorem{lemma}{Lemma}
\title{Variance Regularization for Accelerating Stochastic Optimization}
\author{%
  Tong Yang\thanks{equal contribution} \\
  Department of Physics\\
  Boston College\\
  Newton, MA 02467 \\
  \texttt{yangto@bc.edu} \\
   \And
   Long Sha\footnotemark[1]\\
   Department of Computer Science \\
   Brandeis University \\
   Waltham, MA 02453 \\
   \texttt{longsha@brandeis.edu} \\
   \And
   Pengyu Hong \\
   Department of Computer Science \\
   Brandeis University \\
   Waltham, MA 02453 \\
   \texttt{longsha@brandeis.edu} \\
}
\begin{document}

\maketitle

\begin{abstract}

While nowadays most gradient-based optimization methods focus on exploring the high-dimensional geometric features, the random error accumulated in a stochastic version of any algorithm implementation has not been stressed yet. In this work, we propose a universal principle which reduces the random error accumulation by exploiting statistic information hidden in mini-batch gradients. This is achieved by regularizing the learning-rate according to mini-batch variances. Due to the complementarity of our perspective, this regularization could provide a further improvement for stochastic implementation of generic $1^{st}$ order approaches. With empirical results, we demonstrated the variance regularization could speed up the convergence as well as stabilize the stochastic optimization.

\end{abstract}

\section{INTRODUCTION}

One of the essential tasks in machine learning is optimizing the expectation of a cost function, which is empirically approximated by the average of the cost function over a large set of samples (Tsypkin 1971, Tsypkin 1973, Vapnik 1982, Vapnik 1995):
\begin{align}
    x^* &= \underset{x}{\text{argmin}}\; f(x) \approx \underset{x}{\text{argmin}}\; \frac{1}{N}\sum_{i=1}^{N} \mathrm{f}(x, z_i),
\end{align}
where $z_i$ is a sample, $N$ is the total number of samples, $x$ is the parameter of the cost function $f(x)$, and $\mathrm{f}(x, z_i)$ is the cost of $z_i$ and is differentiable. Gradient descent (GD) algorithms, which rely on $1^{st}$-order gradient, can be used for iteratively solving the above minimization and updating the parameters by moving them in the direction of the negative gradient of the cost. Recently, variants of Stochastic GD (SGD) (Robbins and Monroe 1951) have become dominants in training deep neural networks. SGD approximates the gradient of the cost function at every iteration by sampling a subset of samples.  This research direction turned out to be very effective with many algorithms developed (e.g., Bottou 1998, Schraudolph 1999, Amari, Park, and Fukumizu 2000, Schraudolph 2002, Zhang 2004, Bottou and LeCun 2004, Schraudolph, Yu, and G\"{u}nter 2007, Bottou and Bousquet 2008, Nemirovski, Juditsky, and Shapiro 2009, Bordes, Bottou, and Gallinari 2009, etc.).

Different from conventional $1^{st}$-order techniques that use the same learning rate to update all parameters, AdaGrad (Duchi et al. 2011) deploys a data-dependent way to automatically derive the learning rate on a per-parameter basis. AdaGrad achieves this by dividing the gradient of the current iteration by the square root of the sum of the squared gradients in the history. It offers two variants. One is based on Composite Mirror Descent (Duchi et al. 2010), and the other is based on Xiao's 2010 (Xiao 2010) extension of Nesterov's primal-dual subgradient method (Nesterov 2009).  The major contribution of AdaGrad is that it adds a data-dependent proximal term which depends on the geometry of the data from all past iterations.  Previously in the composite mirror descent and primal dual sub-gradient methods, proximal terms were used but were either fixed or multiplied by a time-dependent constant.  Making the proximal term data-dependent makes it possible to derive tighter bounds on the regret. However, the accumulation of the squared gradients in Adagrad keeps growing during training and causes the learning rate to shrink and eventually become infinitesimally small so that the algorithm is no longer able to learn from data.

To mitigate the rapid decay of the learning rate in AdaGrad, RMSprop (Tieleman and Hinton, 2012), Adam (Kingma and Ba 2015), AdaDelta (Zeiler, 2012), and Nadam (Dozat, 2016) were proposed to limit the influence of historical gradients by using the exponential moving averages of squared gradients in the past. Although this strategy works well in some deep learning applications, these algorithms failed to outperform SGD with momentum. AMSGrad (Reddi et al. 2018)) elucidates the problem of using the exponential moving average of past squared gradients, and proposes to use the maximum of past squared gradients instead. AMSGrad delivers improved performances over Adam on a few popular tasks.

In addition to above $1^{st}$-order methods, another essential issue is the stochasticity in mini-batch approaches. It has been noticed in the previous works on SVRG (Johnson and Zhang 2013), where the stochastic error is reduced at the cost of a calculation of across the whole batch at scheduled moments, and the estimation of error terms is based not on present but delayed knowledge. While it remains unclear to solve the problem with an universal design for general $1^{st}$-order algorithms, we realize that there is "treasure" hidden in the gradients of mini-batches which can be explored to achieve the goal. To this end, we propose a new universal principle to facilitate all $1^{st}$-order algorithms. Our principle utilizes the variance of gradients within each mini-batch to modulate the per-parameter learning rate, and hence is named \underline{V}ariance-\underline{R}egularization (\textbf{VR}). In contrast to existing $1^{st}$-order designs, VR focuses on exploring the stochasticity brought in by generic stochastic implementations other than the high-dimensional geometry and hence can be used as a supplementary to accelerate all existing mini-batch methods. The intuition behind VR is that the update of parameters should be more cautious if there is a relatively larger variance among samples, which can be related to the uncertainty in a stochastic updating rule. We discussed this principle in details from a theoretical perspective and demonstrated the improvement by applying the proposed regularization on SGD. All proofs can be found in \textbf{Supplementary Material.}


\section{STOCHASTICITY IN GRADIENT BASED METHODS}

Before introducing our new principle, we analyze the impact of stochasticity in $1^{st}$-order methods. This requires a more thorough analysis of the convergence when randomness iterations are involved. As a demonstration, we would compare only GD and SGD below, which could be easily generalized to the comparison between batch/mini-batch versions of other $1^{st}$-order methods. 
A parameter to be updated in $t$-th iterations is denoted as $x_t$, and its corresponding gradient calculated with respect to sample $i$ is denoted as $\mathrm{g}_{t,i}$. The value of the cost-function $f(x)$ at $x_{t}$ is denoted as $f_t$. And we assume there exits an optimal value $x^*$ minimizing the cost function to $f^* \triangleq f(x^*)$.

\paragraph{GD -- A Deterministic Process}
We define the following term calculated in $t$-th iteration with respect to the whole data-set (batch method) as the \textit{true-gradient}:
\begin{align}\label{g_t}
g_t = \frac{1}{N}\sum_{i=1}^{N} \mathrm{g}_{t,i}.
\end{align}
In a GD method, the updating rule is:
\begin{align}\label{GD}
x_{t+1} &=  x_t - \gamma_t g_t,
\end{align}
where $\gamma_t$ in general is a time-dependent learning rate. Given $x_t$ (and $\gamma_t$), the true gradient $g_t$ and hence the next-moment parameter $x_{t+1}$ are fully determined without any randomness.
Therefore this is a deterministic process. 

\paragraph{SGD -- A Stochastic Process}
We define the following term calculated in $t$-th iteration with respect to only a mini-batch $\{Z_t\}$ as the \textit{mean-gradient}:
\begin{align}\label{mean_g}
\bar{g}_t = \frac{1}{m}\sum_{i\in Z_t}^{m} \mathrm{g}_{t,i}.
\end{align}
And the stochastic version of the updating rule is:
\begin{align}\label{SGD}
x_{t+1} &= x_t - \gamma_t \bar{g}_t.
\end{align}
Since $\{Z_t\}$ is a mini-batch randomly selected from the whole dataset which brings in certain stochasticity, it is legal to view both $x_t$ and $\bar{g}_t$ as random variables. We define the past history information set: $\mathcal{P}_t \triangleq \{x_{\tau}\}_{\tau=0}^t.$

\paragraph{Relation between GD/SGD}  
Generally from GD and SGD, $\{g_t\}$ and $\{\bar{g}_t\}$ are two completely different sets. To make comparisons, we would assume: at any coordinate $x_t$ in the parameter space, the one-sample gradients $\mathrm{g}_{t}$ are $iid$ variables, and that the $1^{st}$ and $2^{nd}$ order moments of $\mathrm{g}_{t}$ obeys:
\begin{align}
\mathbb{E}\big[\mathrm{g}_t\big|\mathcal{P}_t\big] = g_t, \qquad E\big[\big(\mathrm{g}_t^2 - g^2_t\big)\big|\mathcal{P}_t\big] = \sigma_{0}^2(x_t).
\end{align}
The notation $g_t$ is same as the true-gradient -- this is not an abuse but a further assumption for comparison between GD/SGD. Besides, we also assume $\mathrm{g}_t$ at different moments are independent to each other.

According to the $iid$ nature, the Central Limit Theorem (CLT) implies that the statistics of mean-gradient $\bar{g}_t$, which is also a random variable, should obey:
\begin{align}
\mathbb{E}\big[\bar{g}_t\big|\mathcal{P}_t\big] = g_t, \qquad \mathbb{E}\big[\big(\bar{g}_t^2 - g^2_t\big)\big|\mathcal{P}_t\big] = \frac{\sigma_{0}^2(x_t)}{m}.
\end{align}
In the limit $m\rightarrow \infty$, we have:
\begin{align}\label{limit}
\lim_{m\rightarrow \infty}\bar{g}_t = g_t,\qquad \lim_{m\rightarrow \infty}\sigma_{t}^2 = 0,
\end{align}
and the method goes back to the GD version. 

The convergence of GD has been well-studied. In the case of SGD, the convergence is usually discussed by analyzing the expectation value of certain criteria, such as Lyapunov-criteria $|x_{t}-x^*|^2$ (Bottou 1998), or the accumulated loss $\sum^T_{t=0} \big[f_{t+1} - f^*\big]$ (Shamir and Zhang 2013). There are, however, two more questions requiring further investigations:
\begin{enumerate}
\item As a stochastic process, the convergence of expectation values is not enough. And the convergence of variances is also necessary. If the variance diverges, then in practice an algorithm might still behave poorly.

\item Within a permitted range, a larger learning rate accelerates the convergence of mean value, but may also enlarge the error accumulation from stochasticity. Hence a trade-off has to be made.
\end{enumerate}

The convergence of the variance has been analyzed in \textbf{Supplementary Material S1.1}. In the following, we analyze the impact of the stochastic error on the convergence rate. 

\subsection{Impact of Stochasticity on Convergence}

As used most in current literature, we choose the following criteria, which is an averaged loss:
\begin{align}
\frac{1}{T}\sum_{t=1}^T\big[f(x_{t}) - f^*\big].
\end{align}
The convergence of this criteria has been studied thoroughly (Shamir and Zhang 2013). While in our current analysis, we would focus on the impact of the variance term which captures the stochasticity. To simplify the discussion, we assume the $L$-Lipschitz continuity for the convex loss function $f(x)$, i.e. $\|g(x) - g(y)\| \leq L\|x-y\|_2$. And we would further assume $\gamma_t \leq \frac{1}{L}$ at any moment $t$ to make the learning process more monotonic, i.e. loss function keeps decreasing. It is reasonable to put these strong constraints since we are only interested in the role of stochasticity. For convenience, we define the following mean-zero random variable:
\begin{align}
\epsilon_t \triangleq \bar{g}_t - g_t.
\end{align}
We also define $\sigma_t^2\triangleq \mathbb{E}[\epsilon_t^2]$ and $\eta_t^4\triangleq \mathbb{E}[\epsilon_t^4]$, and the normal-time-average of them: $\bar{\sigma}^2\triangleq \frac{\sum_t \sigma_t^2}{T}$  and $\bar{\eta}^4 \triangleq \frac{1}{T}\sum_t \eta_t^4$, which are assumed to be finite. 
Firstly, we could derive an upper bound for the criteria:
\begin{lemma}\label{lem_los_upbnd}
Assume the $L$-Lipschitz continuity for the convex loss function $f(x)$, the learning rate $\gamma_t \leq \frac{1}{L}$ at any time $t$, and a bounded region for the parameter $|x_t-x^*|^2 \leq M^2$. Then the criteria $\frac{1}{T}\sum_{t=1}^{T}\big[f(x_{t}) - f^*\big]$ would be suppressed below the following upper bound:
\begin{align}\label{upbnd}
S_T =&  \frac{1}{\gamma_TT}\cdot\frac{M^2}{2}+ \frac{1}{2T}\sum_{t=1}^{T}\gamma_t\big(1+L\gamma_t\big)\epsilon_t^2\nonumber \\
&- \frac{1}{T}\sum_{t=1}^{T} \epsilon_t\big(x_t-L\gamma_t^2g_t-x^*\big).
\end{align}
\end{lemma}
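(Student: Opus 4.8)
The plan is to bound each summand by combining the exact one-step expansion of the squared distance to the optimum with the $L$-smoothness descent inequality, then sum over $t$ and telescope. First I would set $d_t\triangleq|x_t-x^{\ast}|^2$ and expand the update \eqref{SGD}, obtaining the identity $d_{t+1}=d_t-2\gamma_t\langle\bar g_t,\,x_t-x^{\ast}\rangle+\gamma_t^2|\bar g_t|^2$, equivalently $\langle\bar g_t,\,x_t-x^{\ast}\rangle=\tfrac{d_t-d_{t+1}}{2\gamma_t}+\tfrac{\gamma_t}{2}|\bar g_t|^2$; substituting $\bar g_t=g_t+\epsilon_t$ isolates $\langle g_t,\,x_t-x^{\ast}\rangle$ up to the mean-zero linear term $-\langle\epsilon_t,\,x_t-x^{\ast}\rangle$. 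I would then invoke two consequences of convexity and $L$-smoothness of $f$: the subgradient inequality $f(x_t)-f^{\ast}\le\langle g_t,\,x_t-x^{\ast}\rangle$, and the descent lemma $f(x_{t+1})\le f(x_t)-\gamma_t\langle g_t,\,\bar g_t\rangle+\tfrac{L\gamma_t^2}{2}|\bar g_t|^2$, which is exactly what the assumed $L$-Lipschitz continuity of the gradient buys. Adding these two and eliminating $\langle g_t,\,x_t-x^{\ast}\rangle$ with the identity above gives
\begin{align}
f(x_{t+1})-f^{\ast}\;\le\;\frac{d_t-d_{t+1}}{2\gamma_t}-\langle\epsilon_t,\,x_t-x^{\ast}\rangle+\frac{\gamma_t(1+L\gamma_t)}{2}\,|\bar g_t|^2-\gamma_t\langle g_t,\,\bar g_t\rangle .
\end{align}

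The crucial step is to expand the last two terms in $g_t$ and $\epsilon_t$: they collect to $\tfrac{\gamma_t(L\gamma_t-1)}{2}|g_t|^2+L\gamma_t^2\langle g_t,\epsilon_t\rangle+\tfrac{\gamma_t(1+L\gamma_t)}{2}|\epsilon_t|^2$, and because the hypothesis $\gamma_t\le 1/L$ makes the coefficient of $|g_t|^2$ non-positive, that entire term can simply be dropped. What remains, together with $-\langle\epsilon_t,x_t-x^{\ast}\rangle$, is exactly the per-$t$ summand $\tfrac12\gamma_t(1+L\gamma_t)\epsilon_t^2-\epsilon_t\big(x_t-L\gamma_t^2 g_t-x^{\ast}\big)$ appearing in \eqref{upbnd}. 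Summing $t=1,\dots,T$, dividing by $T$, and bounding $\sum_t\tfrac{d_t-d_{t+1}}{2\gamma_t}\le\tfrac{M^2}{2\gamma_T}$ by Abel summation using $|x_t-x^{\ast}|^2\le M^2$ (and $\{\gamma_t\}$ non-increasing, or simply constant), reproduces $S_T$, including its leading term $\tfrac{M^2}{2\gamma_T T}$.

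I expect the main obstacle to be the sign bookkeeping in that cancellation: it works only because the descent inequality is \emph{added} to the subgradient inequality — using the subgradient bound alone leaves an uncontrolled $+\tfrac{\gamma_t}{2}|g_t|^2$ — and it is for the same reason that the quantity naturally bounded is $\tfrac1T\sum_t[f(x_{t+1})-f^{\ast}]$, the accumulated loss in the sense of Shamir and Zhang, which is how the criterion in Lemma~\ref{lem_los_upbnd} should be read. A minor secondary point: the telescoping of the distance terms needs $\gamma_t$ monotone non-increasing (as in the usual diminishing-step schedules); otherwise $\tfrac{M^2}{2}\sum_t|\gamma_t^{-1}-\gamma_{t-1}^{-1}|$ takes its place.
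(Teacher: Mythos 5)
Your proof is correct and takes essentially the same route as the paper's: adding the descent inequality to the convexity bound, eliminating $\langle g_t,x_t-x^*\rangle$ via the exact expansion of $|x_{t+1}-x^*|^2$, dropping the non-positive $\tfrac{\gamma_t}{2}(L\gamma_t-1)g_t^2$ term using $\gamma_t\le\tfrac{1}{L}$, and telescoping the distance terms with $|x_t-x^*|^2\le M^2$ to get $\tfrac{M^2}{2\gamma_T T}$. Your explicit caveat that the telescoping bound requires $\gamma_t$ non-increasing is a condition the paper also relies on, implicitly, in the same summation-by-parts step.
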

With \textbf{Lemma \ref{lem_los_upbnd}}, we could obtained the following results for the convergence of upper bound $S_T$ in \eqref{upbnd}:
\begin{theorem}\label{them1_conv_lr}
Assume the $L$-Lipschitz continuity for the convex loss function $f(x)$, the learning rate $\gamma_t \leq \frac{1}{L}$ at any time $t$, and a bounded region for the parameter $|x_t-x^*|^2 \leq M^2$. Further assume a finite upper bound exists such that $\sigma_0^2\geq \sigma_t^2$ for any time $t$. Then the expectation of upper bound $S_T$ \eqref{upbnd} satisfies:
\begin{align}\label{S2}
\mathbb{E}[S_T] \leq \frac{1}{\gamma_TT}\cdot\frac{M^2}{2} + \frac{\sum_t \gamma_t}{T}\cdot \sigma_0^2.
\end{align}
\end{theorem}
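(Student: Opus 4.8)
The plan is to take expectations term by term in the bound \eqref{upbnd} for $S_T$, using that $\epsilon_t$ is conditionally mean-zero given the history $\mathcal{P}_t$. The first summand $\frac{1}{\gamma_T T}\cdot\frac{M^2}{2}$ is deterministic — the step sizes $\gamma_t$ are prescribed, not data-dependent — so it passes through $\mathbb{E}[\cdot]$ unchanged and already reproduces the first term of \eqref{S2}. It remains to control the middle (variance) term and show the last (cross) term vanishes.

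For the cross term $-\frac{1}{T}\sum_t \epsilon_t\big(x_t - L\gamma_t^2 g_t - x^*\big)$, I would condition on $\mathcal{P}_t = \{x_\tau\}_{\tau=0}^t$ and invoke the tower property. The iterate $x_t$ belongs to $\mathcal{P}_t$; since $g_t = \mathbb{E}[\mathrm{g}_t\mid\mathcal{P}_t]$ is a function of $x_t$, and $x^*$ and $\gamma_t$ are non-random, the entire factor $x_t - L\gamma_t^2 g_t - x^*$ is $\mathcal{P}_t$-measurable. Hence
\[
\mathbb{E}\big[\epsilon_t(x_t - L\gamma_t^2 g_t - x^*)\big] = \mathbb{E}\big[(x_t - L\gamma_t^2 g_t - x^*)\,\mathbb{E}[\epsilon_t\mid\mathcal{P}_t]\big] = 0,
\]
because $\mathbb{E}[\epsilon_t\mid\mathcal{P}_t] = \mathbb{E}[\bar g_t - g_t\mid\mathcal{P}_t] = 0$ by the unbiasedness assumption on the mini-batch gradient. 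So this term drops out of the expectation entirely.

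For the middle term I would use the standing hypothesis $\gamma_t \le 1/L$, which gives $1 + L\gamma_t \le 2$ and thus $\gamma_t(1+L\gamma_t) \le 2\gamma_t$. Taking expectations and using $\mathbb{E}[\epsilon_t^2] = \sigma_t^2 \le \sigma_0^2$, each summand satisfies $\mathbb{E}\big[\gamma_t(1+L\gamma_t)\epsilon_t^2\big] \le 2\gamma_t\sigma_0^2$. Summing over $t = 1,\dots,T$ and multiplying by $\frac{1}{2T}$ gives $\frac{\sum_t\gamma_t}{T}\,\sigma_0^2$, the second term of \eqref{S2}. Combining the three contributions yields the claim.

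The computation is short; the one place needing care is the adaptedness bookkeeping in the cross term. One must be explicit that $\gamma_t$ is non-random — so that the analysis genuinely isolates the stochasticity injected through the gradient estimate — and that $g_t$ is the conditional mean (not an independent object), so that $x_t - L\gamma_t^2 g_t - x^*$ is truly $\mathcal{P}_t$-measurable and can be pulled outside the conditional expectation. Beyond that, only the two elementary inequalities $1+L\gamma_t\le 2$ and $\sigma_t^2\le\sigma_0^2$ are used.
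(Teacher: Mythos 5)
Your proof is correct and follows essentially the same route as the paper's: bound the first term deterministically, kill the cross term in expectation using $\mathbb{E}[\epsilon_t\mid\mathcal{P}_t]=0$ (the paper simply states that this term vanishes, while you spell out the tower-property/measurability argument), and bound the middle term via $1+L\gamma_t\le 2$ and $\sigma_t^2\le\sigma_0^2$. No gaps; your explicit adaptedness bookkeeping is a slightly more careful rendering of the same computation.
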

\begin{theorem}\label{them2_conv_lr}
Assume the $L$-Lipschitz continuity for the convex loss function $f(x)$, the learning rate $\gamma_t \leq \frac{1}{L}$ at any moment, and a bounded region for the parameter $|x_t-x^*|^2 \leq M^2$; and for simplicity, assume the distribution of $\epsilon_t$ at any moment $t$ to be symmetric, then the following result holds:
\begin{align}
Var(S_T) \leq \frac{1}{T}\bigg[4M^2\bar{\sigma}^2 + \frac{\bar{\eta^4}}{L^2}\bigg].
\end{align}
\end{theorem}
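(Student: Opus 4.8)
The plan is to note that the first term of $S_T$, namely $\frac{M^2}{2\gamma_T T}$, is deterministic and therefore drops out of the variance; writing $A \triangleq \frac{1}{2T}\sum_{t=1}^T \gamma_t(1+L\gamma_t)\epsilon_t^2$ and $B \triangleq \frac{1}{T}\sum_{t=1}^T \epsilon_t\big(x_t - L\gamma_t^2 g_t - x^*\big)$, we have $Var(S_T) = Var(A-B) = Var(A) + Var(B) - 2\,Cov(A,B)$, and I would bound the three pieces separately. The structural fact driving everything is that, conditionally on $\mathcal{P}_t$, $\epsilon_t$ has mean zero and — by the symmetry assumption of this theorem — vanishing third moment, while $x_t$, $g_t$ and all $\epsilon_s$ with $s<t$ are $\mathcal{P}_t$-measurable.

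First I would bound $Var(B)$. Set $w_t \triangleq x_t - L\gamma_t^2 g_t - x^*$, which is $\mathcal{P}_t$-measurable, so $\{\epsilon_t w_t\}$ is a martingale-difference sequence and the cross terms in $\mathbb{E}[B^2]$ vanish by conditioning, leaving $Var(B) = \frac{1}{T^2}\sum_t \mathbb{E}[\epsilon_t^2 w_t^2]$. Using $\|g_t\| = \|g(x_t)-g(x^*)\| \le L\|x_t-x^*\| \le LM$ together with $\gamma_t \le 1/L$ gives the pathwise estimate $\|w_t\| \le M(1+L^2\gamma_t^2) \le 2M$, hence $Var(B) \le \frac{4M^2}{T^2}\sum_t \sigma_t^2 = \frac{4M^2\bar\sigma^2}{T}$.

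Next I would show $Cov(A,B)=0$. Since $\mathbb{E}[B]=0$ it suffices to show $\mathbb{E}[\epsilon_s^2 \epsilon_t w_t] = 0$ for all $s,t$: for $s<t$ and for $s>t$ this follows by conditioning on the appropriate history and invoking $\mathbb{E}[\epsilon_t\mid\mathcal{P}_t]=0$ together with the assumed inter-step independence of the mini-batch noise, while for $s=t$ it is precisely $\mathbb{E}\big[w_t\,\mathbb{E}[\epsilon_t^3\mid\mathcal{P}_t]\big]$, which vanishes by the symmetry of $\epsilon_t$. Finally, for $Var(A)$ I would use $\gamma_t(1+L\gamma_t)\le 2/L$ when $\gamma_t\le 1/L$, and that the $\epsilon_t^2$ are pairwise uncorrelated across $t$, to get $Var(A) = \frac{1}{4T^2}\sum_t \gamma_t^2(1+L\gamma_t)^2\,Var(\epsilon_t^2) \le \frac{1}{T^2 L^2}\sum_t \eta_t^4 = \frac{\bar\eta^4}{T L^2}$, where $Var(\epsilon_t^2)=\eta_t^4-\sigma_t^4\le\eta_t^4$. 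Adding the three contributions gives $Var(S_T) \le \frac{1}{T}\big[4M^2\bar\sigma^2 + \bar\eta^4/L^2\big]$.

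The main obstacle is the claim that the $\epsilon_t^2$ — and, more generally, the mixed-time-step terms appearing in $Cov(A,B)$ — are uncorrelated: although the per-step mini-batch gradients are assumed independent across iterations, $x_t$ is itself a function of the earlier noise, so $\epsilon_t$ and $\epsilon_s$ are not literally independent. Making the decoupling rigorous requires the modeling assumption that the conditional law of $\epsilon_t$ given $\mathcal{P}_t$ — in particular its second and fourth moments — does not depend on $x_t$, so that $\epsilon_t$ is independent of $\mathcal{P}_t$ and the cross terms collapse; this is where the argument leans hardest on the i.i.d./CLT setup introduced earlier, and it is worth isolating explicitly as a hypothesis.
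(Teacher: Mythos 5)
Your proof is correct and follows essentially the same route as the paper's: the same split of $S_T$ into the quadratic-in-$\epsilon_t$ term and the linear term $\sum_t \epsilon_t\big(x_t - L\gamma_t^2 g_t - x^*\big)$, the same use of symmetry to kill the odd-moment cross terms, the Lipschitz bound $|x_t - L\gamma_t^2 g_t - x^*| \le 2M$, and $(1+L\gamma_t)\gamma_t \le 2/L$ to reach $\frac{1}{T}\big[4M^2\bar{\sigma}^2 + \bar{\eta}^4/L^2\big]$. The decoupling issue you flag (that $\epsilon_t$ is not literally independent of $\mathcal{P}_t$ because the conditional moments may depend on $x_t$) is real, but the paper's own proof simply sidesteps it by treating $x_t$, $g_t$, and the $|x_t-x^*|^2$ terms as constants when computing $\mathbb{E}[S_T^2]$, so your conditioning/martingale-difference treatment and your suggestion to state the moment-homogeneity assumption explicitly is, if anything, a more careful rendering of the same argument.
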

Now suppose one designs a vanishing learning rate:
\begin{align}
\gamma_t \sim t^{-\alpha}.
\end{align}
From the result \eqref{S2}, the expectation value behaves\footnote{See Supplementary Material S1.3 for explanations.}:
\begin{align}
\mathbb{E}[S_T] &\sim \mathcal{O}(T^{\alpha -1}) + \mathcal{O}(T^{-\alpha}).
\end{align}
Again, due to the stochasticity, at most we could achieve a convergence of order $\frac{1}{\sqrt{T}}$ by choosing $\alpha = \frac{1}{2}$.

The analysis above indicates the existence of stochasticity requires a trade-off between a faster convergence of mean value and a smaller accumulation of error. From \eqref{S2}, we know the accumulated loss at least would scale as
$\mathbb{E}[S_T] \sim \frac{M\sigma_0}{\sqrt{T}}$. However, the above analysis assumes the learning rate design is independent of the variance term $\sigma_t^2$. If we can adjust the learning rate $\gamma_t$ according to $\sigma_t^2$, then the accumulated loss might be effectively reduced.

\section{VARIANCE REGULARIZATION OF LEARNING RATE}

In this section we would derive a variance-based regularization of learning rates to reduce the impact of stochastic error. 
Importantly, this adaptive regularization is independent of any geometry-adaptive design, such as Adam, AdaGrad, or RMSProp.

From now on, we consider a general updating rule:
\begin{align}\label{general_update}
x_{t+1} &= x_t - \gamma_t \bar{\delta}_t.
\end{align}
Note that here we change notations, and $\bar{\delta}_t$ is NOT restricted to mean-gradient anymore, but can be any updating increment of parameter $x_t$. For example, in Adam, $\bar{\delta}_t$ would represent a "history-normalized momentum". Other definitions, such as $\epsilon_t$, $\sigma_t^2$, and so on,  would be modified correspondingly. And we can separate the whole learning rate into two parts:
\begin{align}
\gamma_t \equiv \alpha_t\lambda_t \triangleq \alpha_t\lambda[\sigma_t^2].
\end{align}
The new variance-based regularization acts on the $\lambda_t$ part, which should be of $\mathcal{O}(1)$. And the case $\lambda_t\equiv 1$ corresponds to conventional methods. We would call $\lambda_t$ the \textit{regularizer}.

\subsection{Intuition and Caveats}

The intuition of this adaptive regularization is simple:  the variance captures the deviation of mean-gradients from true-gradients and hence is related to the probability to go in a wrong direction. Therefore a natural strategy follows: when the risk of mistaking is larger, take smaller steps to prevent large deviations from the track. As a result, we would try to reduce the learning-rate (hence learning rate) when the variance is large.

There are, however, some caveats to be kept in mind:
\begin{enumerate}
\item The smaller the learning rate, the closer the mini-batch algorithm stays to the track of the batch-version. However, a finite learning rate is required to ensure the basic convergence of the mean value. Mathematically, when learning-rates are too small, the first term in \eqref{S2} would diverge.
\item In the derivation of the above upper bound $S_T$, we assumed $\gamma_t \leq \frac{1}{L}$. Although this is not required strictly by the problem itself,  to simplify the discussion, we would still consider this constraint in the calculation.
\end{enumerate}

\subsection{Mathematical Formulation of Optimizing Stochasticity}

Now we formulate the problem into a mathematical optimization problem. Basically, we are trying to minimize the following term:
\begin{align}\label{p0}
\sum_{t=1}^{T}\gamma_t\big[1+L\gamma_t\big]\sigma_t^2,
\end{align}
by varying the non-negative values of $\{\gamma_t\}$, \emph{while ensuring the convergence of $\mathbb{E}[S_T]$ at the same time}. For simplicity we assume $\alpha_t \equiv \alpha_0 \lesssim \frac{1}{L}$, and define: $\lambda_0 \triangleq  \frac{1}{\alpha_0L}$.
To ensure a monotonic decrease of cost function, we should require $\lambda_t \leq \lambda_0$. Then the above problem becomes to minimize:
\begin{align}\label{p1}
Q_T \triangleq \sum_{t=1}^{T}\lambda_t\bigg[1+\frac{\lambda_t}{\lambda_0}\bigg]\sigma_t^2.
\end{align}
Without extra constraint, this problem has the unique solution at $\lambda_t \equiv 0$ , which is not surprising since $Q_T$ captures the deviation from a batch-method, and the deviation would be zero if no step is taken at all.
To prevent this result, we add a lower bound $\lambda_{min}$ to ensure a non-zero learning-rate at each iteration. Besides, as mentioned above, we need to ensure the convergence of $\mathbb{E}[S_T]$. Therefore, we add an averaged global constraint to ensure the mean-value convergence:
\begin{align}\label{p3}
\sum_{t=1}^T \lambda_t = T\bar{\lambda}.
\end{align}
And for simplicity, we set $\bar{\lambda}=1$.
However, the solution may still not be valid: it would assign an extremely large learning-rate to the moment with minimum $\sigma_t^2$ while leaving others to be small. This means the algorithm moves very slowly except jumps at the extreme moment, which is unreasonable. Hence, we require also an upper bound $\lambda_{max}$ for regularizers \footnote{e.g., one may directly use $\lambda_0$ as the upper bound.}.

Combine the problem \eqref{p1} with all constraints, we eventually formulate the problem into:
\begin{equation}
\left\{\arraycolsep=1.4pt\def\arraystretch{2.2}
\begin{array}{l}
\lambda_t = \underset{\{\lambda_t\}}{\min}\bigg\{\sum_t\lambda_t\big[1+\frac{\lambda_t}{\lambda_0}\big]\sigma_t^2  \:\bigg|\: \{\sigma_t^2\},L \bigg\}, \\
\sum_t \lambda_t =T, \qquad \lambda_t \in [ \lambda_{min}, \lambda_{max}].
\end{array}
\right.
\end{equation}
This is a problem to minimize the stochastic error when ensuring the convergence of $\mathbb{E}[S_T]$.

\subsection{Approximate Solutions}

The above problem is not trivial to solve. 
And more importantly, the solution might include a relation between the variants at two arbitrary moments $\sigma_{t}^2$ and $\sigma_{t'}^2$, while in practice one can at most know the past history (the causality). Therefore, we want an approximate solution obeying the causality, i.e. to find $\{\lambda_t\}$ that depend only on the given history $\{\sigma_{\tau}^2|\tau\leq t\}$.

Now suppose $\{\sigma_t^2\}$ values have small variations, then the optimized $\{\lambda_t\}$ should also be close to each other, in which case the second constraint could be automatically satisfied after solving the following problem:
\begin{equation}\label{approx_prob}
\left\{\arraycolsep=1.0pt\def\arraystretch{1.8}
\begin{array}{l}
\lambda_t = \underset{\{\lambda_t\}}{\min}\bigg\{\sum_t\lambda_t\big[1+\frac{\lambda_t}{\lambda_0}\big]\sigma_t^2  \:\bigg|\: \{\sigma_t^2\},L \bigg\}, \\
\sum_t \lambda_t =T,
\end{array}
\right.
\end{equation}
which is an approximate problem. We the obtained:
\begin{theorem}\label{them_solution}
Assume the $L$-Lipschitz continuity for the convex loss function $f(x)$, and that $\alpha < \frac{1}{L}$; further assume that  local variances $\sigma_t^2$ at two arbitrary different moments satisfy:
 \begin{align}\label{assumption}
\max_{s,t}\big(|\sigma_t^2 - \sigma_s^2|\big) \ll \bar{\sigma}^2.
\end{align}
Then the stochastic error accumulation \eqref{p0} could be minimized by the following regularizer:
\begin{align}\label{sl1}
\lambda_t = \bigg(1 + \frac{\lambda_0}{2}\bigg)\frac{\bar{\sigma}^2}{\sigma_t^2} - \frac{\lambda_0}{2},
\end{align}
when the convergence of $\mathbb{E}[S_T]$ is still ensured.
\end{theorem}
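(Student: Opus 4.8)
The plan is to recognize the approximate problem \eqref{approx_prob} for what it is: minimization of a separable, strictly convex quadratic in the vector $(\lambda_1,\dots,\lambda_T)$ subject to the single linear equality constraint $\sum_t \lambda_t = T$. Indeed, after the substitution $\gamma_t=\alpha_0\lambda_t$, $L=1/(\alpha_0\lambda_0)$ the objective \eqref{p0} is proportional to $Q_T$ in \eqref{p1}, and each summand $\lambda_t\big(1+\lambda_t/\lambda_0\big)\sigma_t^2$ has second derivative $2\sigma_t^2/\lambda_0>0$ in $\lambda_t$, so the objective is strictly convex and any Karush--Kuhn--Tucker point of \eqref{approx_prob} is \emph{the} global minimizer; Lagrange multipliers therefore suffice. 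First I would form
\begin{align}
\mathcal{L} = \sum_{t=1}^{T}\lambda_t\sigma_t^2 + \frac{1}{\lambda_0}\sum_{t=1}^{T}\lambda_t^2\sigma_t^2 - \mu\Big(\sum_{t=1}^{T}\lambda_t - T\Big),
\end{align}
and set $\partial\mathcal{L}/\partial\lambda_t = 0$, which gives $\sigma_t^2 + \tfrac{2}{\lambda_0}\lambda_t\sigma_t^2 = \mu$, i.e.\ $\lambda_t = \tfrac{\lambda_0}{2}\big(\mu/\sigma_t^2 - 1\big)$.

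Next I would pin down $\mu$ from the constraint. Substituting the stationarity condition into $\sum_t\lambda_t=T$ yields $\tfrac{\lambda_0}{2}\big(\mu\sum_t\sigma_t^{-2} - T\big) = T$, hence $\mu = T\big(1+2/\lambda_0\big)\big/\sum_s\sigma_s^{-2}$. Reinserting $\mu$ and using the identity $\tfrac{\lambda_0}{2}\big(1+2/\lambda_0\big) = 1 + \tfrac{\lambda_0}{2}$, the exact minimizer of the approximate problem is
\begin{align}
\lambda_t = \Big(1+\frac{\lambda_0}{2}\Big)\frac{\sigma_H^2}{\sigma_t^2} - \frac{\lambda_0}{2}, \qquad \sigma_H^2 \triangleq \frac{T}{\sum_s\sigma_s^{-2}},
\end{align}
so it is governed by the \emph{harmonic} mean $\sigma_H^2$ of the per-iteration variances, not yet by the arithmetic mean $\bar\sigma^2$ appearing in \eqref{sl1}.

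The last step — and the only delicate one — is to trade $\sigma_H^2$ for $\bar\sigma^2$, and this is exactly where hypothesis \eqref{assumption} is used. Writing $\sigma_t^2 = \bar\sigma^2 + \delta_t$ with $\sum_t\delta_t = 0$ and $\max_t|\delta_t|\le\max_{s,t}|\sigma_t^2-\sigma_s^2|\ll\bar\sigma^2$, I would Taylor-expand $\sigma_t^{-2} = \bar\sigma^{-2}\big(1-\delta_t/\bar\sigma^2 + O(\delta_t^2/\bar\sigma^4)\big)$; the linear terms cancel under summation, so $\sum_s\sigma_s^{-2} = T\bar\sigma^{-2}\big(1 + O(\delta^2/\bar\sigma^4)\big)$ and hence $\sigma_H^2 = \bar\sigma^2\big(1 + O(\delta^2/\bar\sigma^4)\big)$; dropping the correction gives precisely \eqref{sl1}. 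I would close with two consistency remarks: (i) under the same smallness hypothesis the $\lambda_t$ are all within $O(\delta/\bar\sigma^2)$ of $1$, so they automatically respect the bounds $[\lambda_{min},\lambda_{max}]$ that were discarded in passing from the full problem to \eqref{approx_prob}, and the global constraint \eqref{p3} holds to leading order; and (ii) since $\sum_t\lambda_t = T$ (equivalently $\bar\lambda=1$) is retained, $\sum_t\gamma_t=\alpha_0T$, so the second term of \eqref{S2} stays at $\alpha_0\sigma_0^2$ while the first still vanishes as $T\to\infty$, whence Theorem \ref{them1_conv_lr} continues to guarantee the convergence of $\mathbb{E}[S_T]$. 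I expect the main obstacle to be making the harmonic-to-arithmetic replacement quantitative enough to certify that the resulting $\lambda_t$ still minimizes \eqref{p0} to the stated accuracy while remaining feasible for the bound constraints that \eqref{approx_prob} drops.
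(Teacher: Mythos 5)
Your proposal is correct and follows essentially the same route as the paper's proof: Lagrange multipliers on the approximate problem \eqref{approx_prob}, a stationarity condition giving $\lambda_t = \tfrac{\lambda_0}{2}(\mu/\sigma_t^2-1)$, the constraint fixing the multiplier in terms of the harmonic (inverse-averaged) variance, and then hypothesis \eqref{assumption} to replace that harmonic mean by $\bar\sigma^2$ and to verify feasibility and the convergence of $\mathbb{E}[S_T]$ via Theorem \ref{them1_conv_lr}. Your added remarks (strict convexity guaranteeing that the stationary point is the unique global minimizer, and the cancellation of the linear terms making the harmonic-to-arithmetic substitution second-order accurate) are refinements the paper leaves implicit, but not a different method.
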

Note the solution \eqref{sl1} reduces to $\lambda_t\equiv1$ if $\sigma_t^2\equiv \bar{\sigma}^2$, which suggests that the proposed regularization improves the algorithm only in the existence of heteroskedasticity in the time series $\{\bar{\delta}_t\}$.

The above solution, however, is still not applicable to practical implementations. Firstly, it is only valid under the assumption \eqref{assumption}.
Secondly, the $\bar{\sigma}^2$ term in the solution still violates the causality.

\section{VR Algorithms}

To find a practically applicable solution, we need a bounded solution of $\lambda_t$ for generic cases instead of \eqref{sl1} obtained given \eqref{assumption}. The form of \eqref{sl1} indicates that a regularizer should be related to the ratio $\frac{\bar{\sigma}^2}{\sigma_t^2}$, and should reduce to $\lambda_t \equiv 1$ in the case $\sigma_t^2\equiv \bar{\sigma}^2$. Without the assumption \eqref{assumption}, the ratio $\frac{\bar{\sigma}^2}{\sigma_t^2}$ behaves poorly, and $\lambda_t$ does not automatically satisfy bounded-region constraint. To enforce this constraint, we consider the following bounded-form\footnote{See Supplementary Material S2.1 for discussions.}:
\begin{align}\label{sl2}
\lambda_t  = \frac{1 + s}{1 + s \cdot \frac{\sigma_t^2}{\bar{\sigma}^2}},
\end{align}
with certain \textit{impact factor}\footnote{See Supplementary Material S2.1 for discussions.} $s>0$, which in our experiments we set as $s=2$. Then in general:
\begin{align}
 \lambda_t \in \bigg[\frac{1 + s}{1 + s \cdot \frac{\sigma_0^2}{\bar{\sigma}^2}}, \;\; (1+s)\bigg],
\end{align}
where $\sigma_0^2$ represents an upper bound of $\sigma_t^2$. This form has the natural upper and lower bounds. Besides, in the case $\sigma_t^2\equiv \bar{\sigma}^2$, we constantly have $\lambda_t =1$. 

The analysis above suggests the form \eqref{sl2} is an applicable regularizer for generic cases. We would use this factor to design specific algorithms. To get rid of the causality issue, we would use the history-average $\bar{\sigma}^2_t = \frac{1}{t}\sum_{\tau}\sigma_{\tau}^2$ as an approximation of $\bar{\sigma}^2$. Besides, we would use the mini-batch variance $v_t^2$ and $\bar{v}_t^2$ to replace $\sigma_t^2$ and $\bar{\sigma}_t^2$ in practice\footnote{See Supplementary Material S2.2 for explanations.}, which is defined as:
\begin{align}
v_t^2 \triangleq \frac{1}{m}\sum_i (\delta_{i,t} - \bar{\delta}_t)^2,\qquad \bar{v}_t^2 \triangleq \frac{1}{t}\sum_{\tau =1}^{t} v_{\tau}^2
\end{align}
Moreover, since $v_t^2$ scales with gradients, in practice we would use a \textit{scale-free variance}: $\rho_t$, which could better capture the stochastic uncertainty. For example, the variance would be always small when gradients themselves are small; however, the relative uncertainty mentioned in our intuition could still be quite large\footnote{See Supplementary Material S2.3 for discussions.}.

To demonstrate the proposed regularizer, we designed below a \textit{regularized} version of SGD (VR-SGD).

\subsection{VR for SGD}
In SGD, $\bar{\delta}_t$ is directly the mean-gradient \textbf{$\bar{\delta}_t =\bar{g}_t$}, and $\rho_t$ is:
\begin{align}\label{rho_sgd}
\rho_t \triangleq \frac{v_t^2 }{\bar{\delta}_t^2},
\end{align}
And the derived algorithm is:
\begin{algorithm}
\caption{VR-SGD}\label{vrg_sgd}
\begin{algorithmic}[1]
\State{$\alpha>0,\,  s>0$}
\State $ t=0, \,  \Omega_0 = 0$
\While {$x_t$ not converge}
\State{$t \gets t+1$}
\State{Acquire a $m$-samples mini-batch $\{z_{t,i}\}$}
\State{$\bar{\delta}_t \gets \frac{1}{m}\sum_i \nabla_x \mathrm{f}(x_t, z_{t,i})$}
\State{$\rho_t \gets \frac{1}{m\bar{\delta}_t^2}\sum_i \nabla_x \mathrm{f}(x_t, z_{t,i})^2 - 1$}
\State{$\Omega_t \gets \Omega_{t-1} + \rho_t $}
\State{$x_t \gets x_{t-1} - \alpha\big[\frac{1 + s}{1 + s\cdot\frac{\rho_t}{\Omega_t}}\big]\bar{\delta}_t $}
\EndWhile
\Return {$x_t$}
\end{algorithmic}
\end{algorithm}

\section{EXPERIMENTS}

We have explored the proposed optimization algorithms on two popular benchmark datasets: \textbf{Fashion-MNIST} (Xiao, Rasul and Vollgraf, 2017) and \textbf{CIFAR-10} (Krizhevsky and Hinton, 2009). 

All the experiments are implemented using Pytorch (Paszke etc., 2017). As the current auto-differentiate framework of Pytorch does not provide gradient variances or per-sample gradients from batch gradient back-propagation, we implemented a procedure which calculates the gradients of individual samples and then the variance within each mini-batch on the fly. This implementation is not efficient. The performance of our method can be much more efficiently improved if the auto-differentiate framework can calculate per-batch gradient variances. We run all the experiments on our machine with 4 NVIDIA Titan-X Pascal GPUs.

\subsection{Fashion-MNIST Experiments}

\textbf{Fashion-MNIST} is a dataset composed of $28\times28$ grey scale images of pieces of 10-classes of clothing. The objective is to correctly classify the class of a specific image. We normalize the examples to unitary-norm before experiments. The dataset is divided into 60,000 training examples and 10,000 test examples.

For this experiment, a simple convolution neural network model is applied: we use two sequential convolution-pooling-activation blocks then appended 2 fully connected layers and a softmax layer.
In SGD versus VR-SGD, we fix the learning rate at 0.01. A batch size of 100 is used. We monitor both training process until both algorithms converges. We observe from Figure.\ref{fig:fmnist_sgd} that, by implementing the proposed regularization, the training process speeds-up remarkably as both training loss and accuracy outperform the SGD without regularization.

\begin{figure}[ht!]
    \centering
    \includegraphics[width=0.7\textwidth]{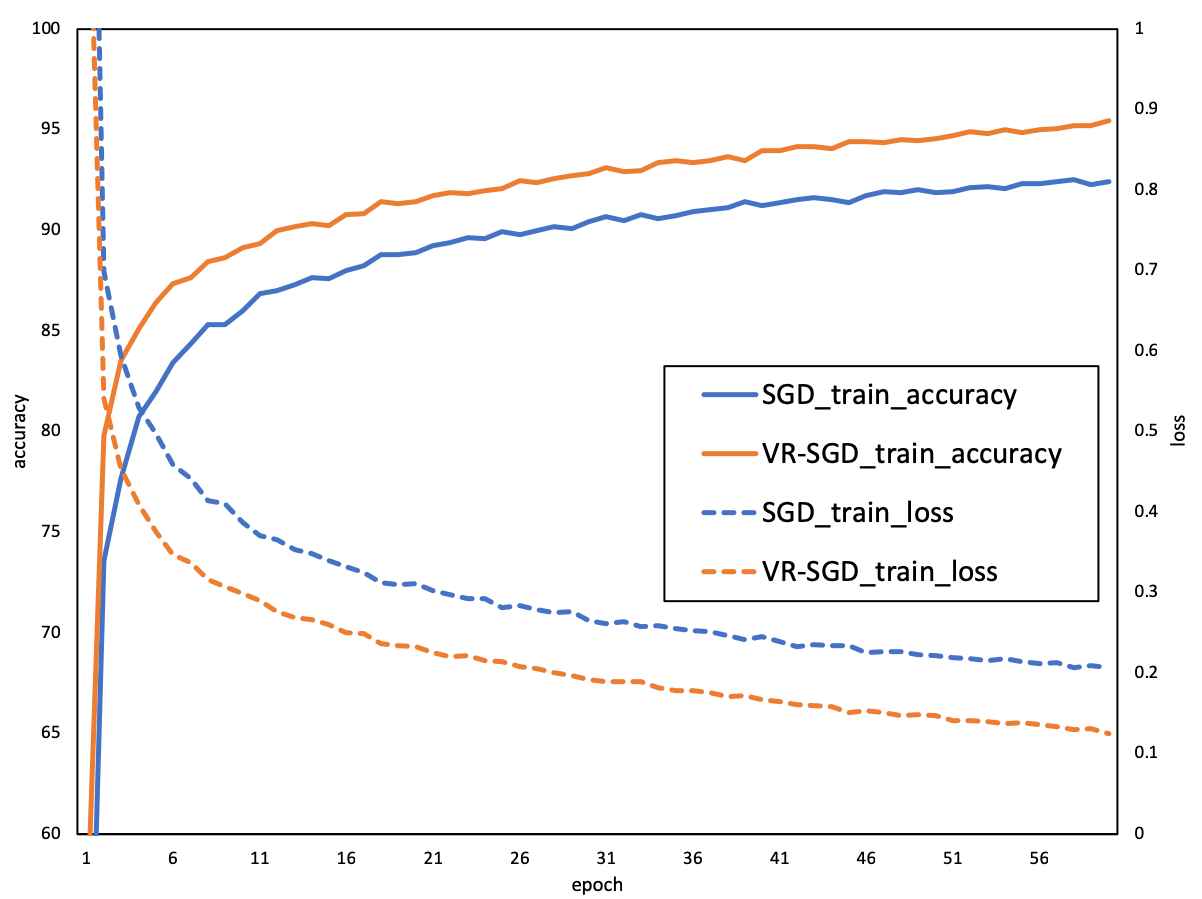}
    \vspace{.1in}
    \caption{Training loss and accuracy comparison of VR-SGD and SGD on Fashion-MNIST.}
\label{fig:fmnist_sgd}
\end{figure}


\subsection{CIFAR-10 Experiments}

The second part of our experiments is done on the \textbf{CIFAR-10} dataset which comprises $50,000$ training, $10,000$ testing images, each having a 32x32 resolution and 3 color channels. The goal is to correctly classify the following ten categories: airplane, automobile, bird, cat, deer, dog, frog, horse, ship, truck. We apply the normalization in this dataset with the same fashion in the previous experiment. We use \textit{LeNet}(LeCun 1998) as our model, consisting of two convolution blocks and 3 fully connected layers.

\begin{figure}[ht!]
    \centering
    \includegraphics[width=0.7\textwidth]{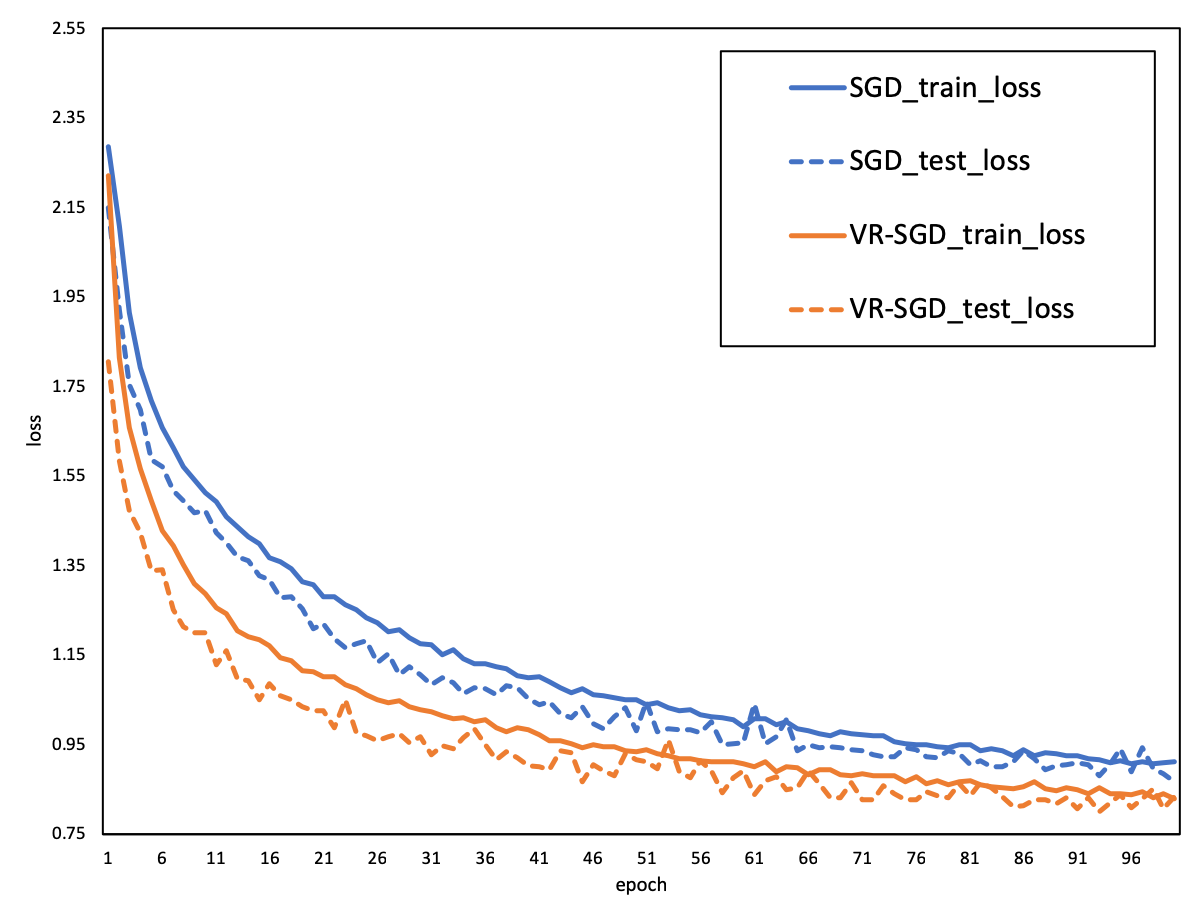}
    \vspace{.1in}
    \caption{Training and testing loss comparison of VR-SGD and SGD on CIFAR-10.}
\label{fig:c10_loss}
\end{figure}

\begin{figure}[ht!]
    \centering
    \includegraphics[width=0.7\textwidth]{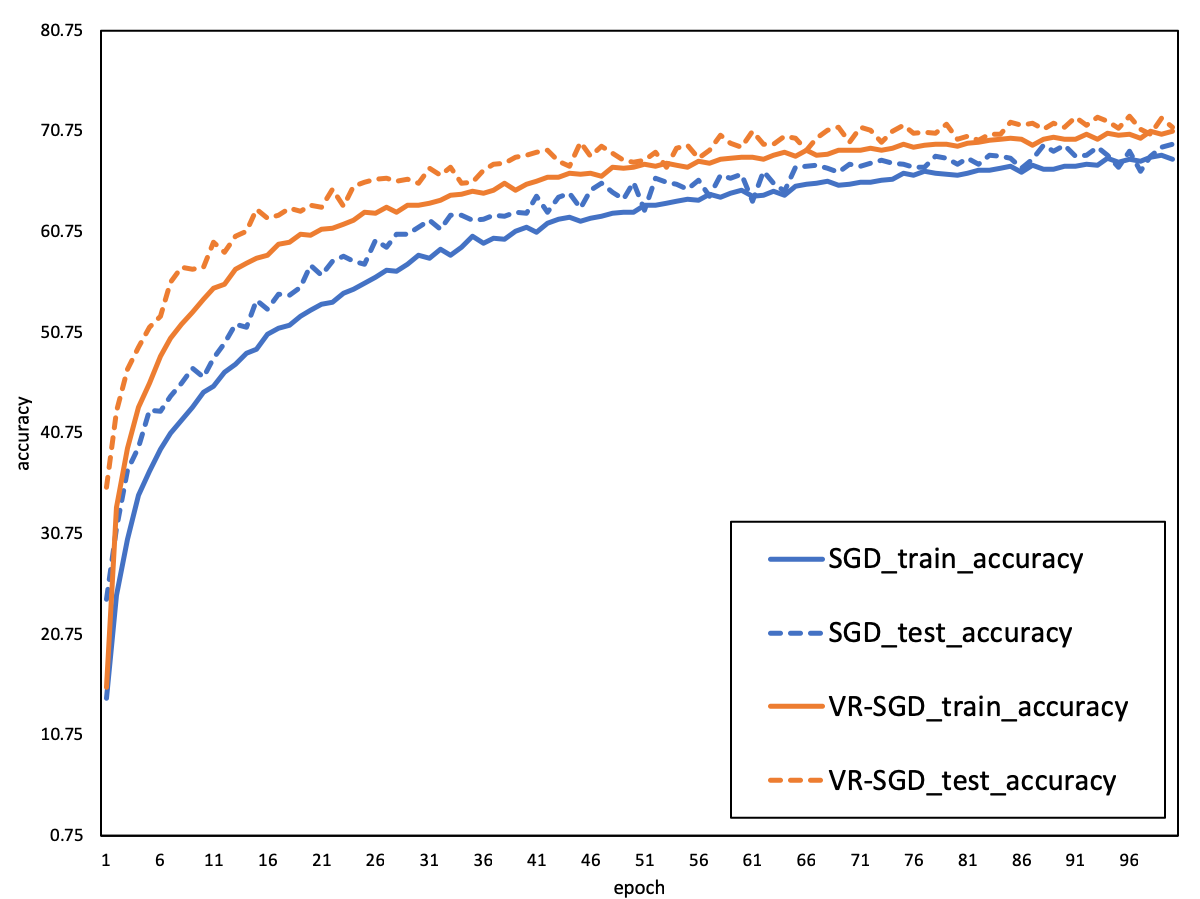}
    \vspace{.1in}
    \caption{Training and testing accuracy comparison of VR-SGD and SGD on CIFAR-10.}
\label{fig:c10_acc}
\end{figure}

Similar patterns, as in \textbf{Fashion-MNIST} case, are observed in Figure. \ref{fig:c10_loss} and Figure. \ref{fig:c10_acc}. Our modification boosted the convergence, especially during the first training steps. We used the default PyTorch setting with learning rate at $0.01$ on both experiments. The variance regularization aided the algorithm, setting a dynamical update and speeding-up the training towards the optima.

\section{CONCLUSION AND DISCUSSION}
In this work, we propose a universal principle which reduces the random error accumulation in current first-order methods, by regularizing the learning according to the variance of mini-batches. This regularization focuses on the heteroskedasticity in a generic stochastic training process, and hence can be implemented easily on popular optimization algorithms to obtain supplementary improvements. We derived the form of regularizer from a theoretical analysis and demonstrated empirically that our regularization could speed up the convergence.

Besides the results mentioned in the main text, in practice, the impact-factor $s$ controls the "magnitude" of regularization. In the above algorithms, $s$ is set as a constant. While on the one hand, regularization reduces the error accumulation from stochasticity; on the other hand, for complicated geometries, the existence of stochasticity would help the algorithm to jump out of local minima. And we could take advantage of stochasticity in the later time when the algorithm is trapped in local minima, by properly defining a decreasing impact-factor $s_t$, which would bring further improvement.

One should be aware that the proposed regularization is derived from the upper-bound behavior
\eqref{S2} other than the average-cost itself, and also that the analysis is restricted to the $2^{nd}$-order statistics, which only captures a small amount of information on the distribution of random variables in a stochastic algorithm. A natural extension would be including more statistic information by considering the sophisticated regularization mechanism. In addition, mini-batch statistics actually may even contain more fruitful information than the stochasticity considered in the current context: the stochasticity disappears in a batch method, while the sample-statistics stills play an important role in formulating the cost-function. A more general formulation would be explored in later work. 

\section*{References}

\small

[1] S. Amari, H. Park, and K. Fukumizu. Adaptive Method of Realizing Natural Gradient Learning for Multilayer Perceptrons.\textit{Neural Computation}, Volume: 12, Issue: 6, 1399-1409, 2000.
	
[2] A. Borders, L. Bottou, and P. Gallinari. SGD-QN: Careful Quasi-Newton Stochastic Gradient Descent. \textit{The Journal of Machine Learning Research}, Volume: 10, 1737-1754, 2009

[3] L. Bottou. Online Learning and Stochastic Approximations. In D. Saad (ed.), \textit{On-line learning in neural networks}, 17 (9), 142, 1998.
	
[4] L. Bottou and O. Bousquet. The Tradeoffs of Large Scale Learning. \textit{Advances in Neural Information Processing Systems}, Volume: 20, 161-168, 2008.
	
[5] L. Bottou and Y. LeCun. Large scale online learning. \textit{Advances in Neural Information Processing Systems}, 16, MIT Press, 2004.

[6] T. Dozat. Incorporating Nesterov Momentum into Adam. \textit{ICLR}, 2016
	
[7] J. Duchi, E. Hazan, Y. Singer. Adaptive Subgradient Methods for Online Learning and Stochastic Optimization. \textit{Journal of Machine Learning Research}, Volume: 12, 2121-2159, 2011.
	
[8] J. Duchi, S. Shalev-Shwartz, Y. Singer, and A. Tewari. Composite Objective Mirror Descent.
	\newblock \textit{COLT}, 2010.
	
[9] R. Johnson and T. Zhang. Accelerating stochastic gradient descent using predictive variance reduction. \textit{International Conference on Neural Information Processing Systems}, Volume: 1, 315-323, 2013. 
	
[10] D. P. Kingman and J. L. Ba. Adam: A Method for Stochastic Optimization. \textit{ICLR}, 2015.

[11] A. Krizhevsky and G. Hinton. Learning multiple layers of features from tiny images. \text{Technical report}, University of Toronto, 1 (4), 7, 2009.
    
[12] Y. LeCun, L. Bottou, Y. Bengio and P. Haffner. Gradient-based learning applied to document recognition. \textit{Proceedings of the IEEE}, Vol: 86, No. 11, 2278-2324, 1998.

[13] A. Nemirovski, A. Juditsky, G. Lan, and A. Shapiro. Robust Stochastic Approximation Approach to Stochastic Programming.	\textit{SIAM J. Optim.}, 19 (4), 1574–1609, 2009.
	
[14] Y. Nesterov. Primal-dual subgradient methods for convex problems. \textit{Mathematical Programming}, Volume: 120, Issue: 1, 221–259, 2009.
	
[15] A. Paszke, S. Gross, S. Chintala, G. Chanan, E. Yang, Z. DeVito, Z. Lin, A. Desmaison, L. Antiga, and A. Lerer. Automatic differentiation in PyTorch. \textit{NIPS 2017 Workshop Autodiff Program Chairs}, 2017.

[16] S. J. Reddi, S. Kale, S. Kumar. On the Convergence of Adam and Beyond. \textit{ICLR}, 2018.

[17] H. Robbins and S. Monro. A Stochastic Approximation Method. \textit{Ann. Math. Statist.} Volume: 22, Number: 3, 400-407, 1951.	
	
[18] N. N. Schraudolph. Local gain adaptation in stochastic gradient descent. \textit{Ninth International Conference on Artificial Neural Networks}, Volume: 2, 569-574,1999.
	
[19] N. N. Schraudolph. Fast Curvature Matrix-Vector Products for Second-Order Gradient Descent. \textit{Neural Computation}, Volume: 14, Issue: 7, 1723-1738, 2002.
	
[20] N. N. Schraudolph, J. Yu, and S. G\"{u}nter. A Stochastic Quasi-Newton Method for Online Convex Optimization. \textit{Proceedings of the Eleventh International Conference on Artificial Intelligence and Statistics}, PMLR 2:436-443, 2007.
	
[21] O. Shamir and T. Zhang. Stochastic Gradient Descent for Non-smooth Optimization: Convergence Results and Optimal Averaging Schemes. \textit{ICML}, 2013.
	
[22] T. Tieleman and G. Hinton. Lecture 6.5-rmsprop: Divide the gradient by a running average of its recent magnitude. \textit{COURSERA: Neural networks for machine learning}, 4 (2), 26-31, 2012.

[23] Y. Z. Tsypkin. \textit{Adaptation and Learning in Automatic Systems.} New York: Academic Press, 1971.

[24] Y. Z. Tsypkin. \textit{Foundations of the Theory of Learning Systems.} New York: Academic Press, 1973.
	
[25] V. Vapnik. \textit{Estimation of Dependencies Based on Empirical Data.} New York: Springer-Verlag, 1982.
	
[26] V. Vapnik. \textit{The Nature of Statistical Learning Theory.} New York: Springer-Verlag, 1995.

[27] H. Xiao, K. Rasul, and R. Vollgraf.  Fashion-MNIST: a novel image dataset for benchmarking machine learning algorithms.  \textit{arXiv}: 1708.07747, 2017.

[28] L. Xiao. Dual Averaging Methods for Regularized Stochastic Learning and Online Optimization. \textit{Journal of Machine Learning Research}, Volume: 12, 2543-2596, 2010.

[29] M. D. Zeiler. ADADELTA: An Adaptive Learning Rate Method. \textit{arXiv: 1212.5701}, 2012.

\newpage 

\newcommand{\beginsupplement}{%
		\onecolumn%
        \setcounter{section}{0}%
        \renewcommand{\thesection}{S\arabic{section}}%
        \setcounter{equation}{0}%
        \renewcommand{\theequation}{S\arabic{equation}}%
        \setcounter{theorem}{-1}%
        \setcounter{lemma}{0}%
     }

\beginsupplement     
\begin{center}
\textbf{\Large Supplemental Materials}
\end{center}

\section{Proof of theorems in the main text}
In this section we provide proofs of all lemmas and theorems mentioned in the main text.

\subsection{Convergence of variance}\label{S1.1}

Here we prove the convergence of the following Lyapunov criteria in a more rigorous way:
\begin{align}
R_t \triangleq  |x_t - x^*|^2.
\end{align}
According to the updating rule, we have the following iterating relation:
\begin{align}\label{S_ite_R}
R_{t+1}=R_{t} - 2\gamma_t\bar{g}_t(x_t-x^*) + \gamma_t^2\bar{g}_t^2.
\end{align}
The convergence of $\mathbb{E}[R_t]$ has been show in many previous works\footnote{Note we abuse $\mathbb{E}[\cdot]$ to represent $\mathbb{E}[\cdot |\mathcal{P}_t] = \int (\cdot)\mathcal{P}_{\epsilon_t} d\epsilon_t$ in the single-moment case, e.g. $\mathbb{E}[R_t]$ here; and to represent a time-ordered integral $\int\mathcal{P}_{\epsilon_t} d\epsilon_t\ldots\int\mathcal{P}_{\epsilon_2} d\epsilon_2\int (\cdot) \mathcal{P}_{\epsilon_1} d\epsilon_1$ in the accumulation case, e.g. $\mathbb{E}[S_T]$ below.}, here we put a simple demonstration:
\begin{theorem}\label{S_thm_mean_conv}
Assume that $\mathbb{E}[|\bar{g}_t|^2] \leq G^2$,  and that a $L$-\textit{Lipschitz} function $f(x)$ is strongly convex such that there exists a positive number $l>0$ satisfying:
\begin{align}
f(x) -  f(x') \geq \nabla_x f(x')\cdot(x-x') + \frac{l}{2}|x-x'|^2.
\end{align}
Then if we choose that $\gamma_t = \frac{1}{tl}$, we have for the Lyapunov criteria $R_t = |x_t - x^*|^2$:
\begin{align}
\mathbb{E}[R_t] \leq \frac{\max\bigg\{R_1, G^2/l^2\bigg\}}{t},
\end{align}
where $x^*$ is the optimal solution to minimize the loss function $f(x)$.
\end{theorem}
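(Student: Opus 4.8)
The plan is to reduce the stochastic recursion \eqref{S_ite_R} to a scalar deterministic recursion for $a_t\triangleq\mathbb{E}[R_t]$ and then close it by induction after substituting the prescribed step size $\gamma_t=\tfrac{1}{tl}$. Three ingredients feed the recursion: unbiasedness of the mini-batch gradient ($\mathbb{E}[\bar{g}_t\mid\mathcal{P}_t]=g_t$), strong convexity of $f$, and the uniform second-moment bound $\mathbb{E}[|\bar{g}_t|^2]\le G^2$.

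First I would take the conditional expectation of \eqref{S_ite_R} given $\mathcal{P}_t$; using $\mathbb{E}[\bar{g}_t\mid\mathcal{P}_t]=g_t=\nabla_x f(x_t)$ this gives
\begin{align}
\mathbb{E}[R_{t+1}\mid\mathcal{P}_t]=R_t-2\gamma_t\,\nabla_x f(x_t)\cdot(x_t-x^*)+\gamma_t^2\,\mathbb{E}[\bar{g}_t^2\mid\mathcal{P}_t].
\end{align}
Next I would apply the strong-convexity inequality with $x=x^*$, $x'=x_t$ to obtain $\nabla_x f(x_t)\cdot(x_t-x^*)\ge f(x_t)-f^*+\tfrac{l}{2}R_t\ge\tfrac{l}{2}R_t$, dropping the nonnegative term $f(x_t)-f^*$ (one could instead combine strong convexity at both $x_t$ and $x^*$, where $\nabla_x f(x^*)=0$, for the sharper bound $\ge l R_t$), and bound the last term by $\mathbb{E}[\bar{g}_t^2\mid\mathcal{P}_t]\le G^2$. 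Taking total expectations yields the deterministic recursion
\begin{align}
a_{t+1}\le(1-\gamma_t l)\,a_t+\gamma_t^2 G^2,
\end{align}
with $1-\gamma_t l\in[0,1)$ because $\gamma_t=\tfrac{1}{tl}\le\tfrac1l$, consistent with the $L$-Lipschitz/monotonicity assumptions.

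Finally I would substitute $\gamma_t=\tfrac{1}{tl}$ to get $a_{t+1}\le\bigl(1-\tfrac1t\bigr)a_t+\tfrac{G^2}{l^2t^2}$ and prove $a_t\le C/t$ with $C\triangleq\max\{R_1,G^2/l^2\}$ by induction: the base case $t=1$ is $a_1=R_1\le C$, and for the step I would insert $a_t\le C/t$ and use $G^2/l^2\le C$. The step I expect to be the main obstacle is exactly this inductive step: the blunt substitution produces $a_{t+1}\le\tfrac{(t-1)C}{t^2}+\tfrac{C}{t^2}=\tfrac{C}{t}$, so sharpening it down to the target $\tfrac{C}{t+1}$ is what the argument turns on, and it is sensitive both to how tightly the strong-convexity and second-moment estimates are combined and to the constant in $\gamma_t$ — unrolling the recursion directly, or combining the bounds too loosely, gives only the weaker $O(\tfrac{\log t}{t})$ rate instead of $O(\tfrac1t)$.
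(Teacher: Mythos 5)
Your skeleton is exactly the paper's: condition on $\mathcal{P}_t$, use unbiasedness of $\bar{g}_t$ and the bound $\mathbb{E}[|\bar{g}_t|^2]\le G^2$ to get a deterministic recursion for $a_t=\mathbb{E}[R_t]$, substitute $\gamma_t=\frac{1}{tl}$, and induct with $C=\max\{R_1,G^2/l^2\}$. But the main line of your argument has a genuine gap at precisely the point you flag. Using only the one-sided strong-convexity inequality at $(x,x')=(x^*,x_t)$ gives $\nabla f(x_t)\cdot(x_t-x^*)\ge \frac{l}{2}R_t$, hence the contraction factor $1-\gamma_t l = 1-\frac{1}{t}$, and as you compute yourself the induction then stalls at $a_{t+1}\le \frac{C}{t}$ rather than $\frac{C}{t+1}$. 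This is not fixable by "combining the estimates more tightly": unrolling $a_{t+1}\le(1-\frac{1}{t})a_t+\frac{G^2}{l^2t^2}$ gives $a_{t+1}\le \frac{G^2}{l^2 t}\sum_{k=1}^{t}\frac{1}{k}$, i.e. only the $O(\frac{\log t}{t})$ rate you mention, so with the factor $\frac{l}{2}$ the stated $\frac{1}{t}$ bound cannot be recovered for $\gamma_t=\frac{1}{tl}$.

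The missing idea is exactly your parenthetical remark, which is what the paper actually does: write strong convexity twice, at $(x^*,x_t)$ and at $(x_t,x^*)$, add the two inequalities, and use $\nabla f(x^*)=0$ to obtain the two-sided bound $\nabla f(x_t)\cdot(x_t-x^*)\ge l\,R_t$ (coefficient $l$, not $\frac{l}{2}$). Then the recursion is $\mathbb{E}[R_{t+1}]\le(1-2\gamma_t l)\mathbb{E}[R_t]+\gamma_t^2G^2=(1-\frac{2}{t})\mathbb{E}[R_t]+\frac{G^2}{l^2t^2}$, and the induction closes mechanically: assuming $a_t\le\frac{C}{t}$, one gets $(1-\frac{2}{t})\frac{C}{t}+\frac{C}{t^2}=\frac{C}{t}-\frac{C}{t^2}=\frac{C(t-1)}{t^2}\le\frac{C}{t+1}$, since $(t-1)(t+1)\le t^2$. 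So the repair is not a tightening of your estimates but the replacement of the $\frac{l}{2}R_t$ bound by the $lR_t$ bound; once you promote that parenthetical to the main line and finish the induction as above, your proof coincides with the paper's.
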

\begin{proof}
From the strong convexity, we know:
\begin{align}
f(x^*) - f(x_t) &\geq \nabla_x f(x_t)\cdot (x^*-x_t) + \frac{l}{2}|x_t - x^*|^2, \nonumber \\
f(x_t) - f(x^*) &\geq \nabla_x f(x^*)\cdot (x_t-x^*) + \frac{l}{2}|x_t - x^*|^2.
\end{align}
Combine the two inequalities above, and use that fact that $\nabla_x f(x^*) = 0$, we obtain that:
\begin{align}\label{S_ineq_1}
\nabla_x f(x_t)\cdot (x_t - x^*) \geq l\cdot |x_t - x^*|^2.
\end{align}
Besides, according to the updating rule, we have:
\begin{align}
\mathbb{E}[R_{t+1}] &= \mathbb{E}[R_t] - 2\gamma_t\mathbb{E}[\bar{g}_t\cdot(x_t - x^*)] + \gamma_t^2\mathbb{E}[|\bar{g}_t|^2] \nonumber \\
&\leq R_t - 2\gamma_t\nabla f(x_t)\cdot(x_t - x^*) + \gamma_t^2G^2.
\end{align}
Applying the above result in \eqref{S_ineq_1}, we have:
\begin{align}
\mathbb{E}[R_{t+1}] &\leq (1- 2l\gamma_t)\mathbb{E}[R_t]  + \gamma_t^2G^2.
\end{align}
Define $R\triangleq \max\big\{R_1, G^2/l^2\big\}$ for convenience, and apparently for $t=1$ we have:
\begin{align}
\mathbb{E}[R_t] \leq \frac{R}{t}.
\end{align}
Now suppose, for certain value of $t\geq 1$,  we had $\mathbb{E}[R_t] \leq \frac{R}{t}$, then according to the choice $\gamma_t = \frac{1}{tl}$ we know:
\begin{align}
\mathbb{E}[R_{t+1}] &\leq (1- \frac{2}{t})\frac{R}{t} + \frac{1}{t^2l^2} G^2 \nonumber \\
&\leq \frac{1}{t}R- \frac{2}{t^2}R + \frac{1}{t^2} \frac{G^2}{l^2} \nonumber \\
&\leq \frac{1}{t}R- \frac{1}{t^2}R  \nonumber \\
&\leq \frac{R}{t+1}.
\end{align}
This completes the proof.
\end{proof}
Now that we demonstrated the convergence of $\mathbb{E}[R_t]$ under proper conditions, the algorithm convergence would be ensured as long as the variance $Var(R_t)$ converges at the same time, which we would prove below. For convenience of analysis, we use the following mean-zero random variable:
\begin{align}
\epsilon_t \triangleq \bar{g}_t - g_t,
\end{align}
where $g_t$ represents the true-gradient at $x_t$, i.e. $g_t = \nabla_xf(x_t)$.
The statistic moments of variable $\epsilon_t$ upto the fourth order are termed as:
\begin{align}
\mathbb{E}[\epsilon_t^2] = \sigma_t^2, \qquad
\mathbb{E}[\epsilon_t^3] = \iota_t^3, \qquad
\mathbb{E}[\epsilon_t^4] = \eta_t^4.
\end{align}
Now, in a SGD algorithm, the iterating relation \eqref{S_ite_R} can be written as:
\begin{align}
R_{t+1}=&\big[R_{t} - 2\gamma_tg_t(x_t-x^*) + \gamma_t^2g_t^2\big] +\big[\gamma_t^2\epsilon_t^2 - 2\gamma_t\epsilon_td_t\big], \nonumber \\
R_{t+1}^2 =&\big[R_{t} - 2\gamma_tg_t(x_t-x^*) + \gamma_t^2g_t^2\big]^2 + \big[\gamma_t^4\epsilon_t^4 - 4 \gamma_t^3\epsilon_t^3d_t+ 4\gamma^2_t\epsilon^2_td^2_t\big]\nonumber \\
& + 2\big[\gamma_t^2\epsilon_t^2 - 2\gamma_t\epsilon_td_t\big]\cdot\big[R_{t} - 2\gamma_tg_t(x_t-x^*) + \gamma_t^2g_t^2\big].
\end{align}
Now take expectation value on both sides:
\begin{align}
\mathbb{E}[R_{t+1}]=&\big[R_{t} - 2\gamma_tg_t(x_t-x^*) + \gamma_t^2g_t^2\big] +\gamma_t^2\sigma_t^2, \nonumber \\
\mathbb{E}[R_{t+1}^2] =&\big[R_{t} - 2\gamma_tg_t(x_t-x^*) + \gamma_t^2g_t^2\big]^2 + \big[\gamma_t^4\eta_t^4 - 4 \gamma_t^3\iota_t^3d_t+ 4\gamma^2_t\sigma^2_td^2_t\big]\nonumber \\
& + 2\gamma_t^2\sigma_t^2\cdot\big[R_{t} - 2\gamma_tg_t(x_t-x^*) + \gamma_t^2g_t^2\big] \nonumber \\
=&\mathbb{E}[R_{t+1}]^2 + \big[\gamma_t^4(\eta_t^4-\sigma^4_t) - 4 \gamma_t^3\iota_t^3d_t+ 4\gamma^2_t\sigma^2_td^2_t\big],
\end{align}
Therefore the variance of the Lyapunov criteria $R_t$ at moment $t$ satisfies:
\begin{align}\label{S_var_R1}
Var(R_{t+1}) = \gamma_t^4(\eta_t^4-\sigma^4_t)+ 4\gamma^2_t\sigma^2_td^2_t  - 4 \gamma_t^3\iota_t^3d_t,
\end{align}
where we defined: 
\begin{align}
d_t \triangleq \mathbb{E}\big[x_{t+1}-x^*\big|\mathcal{P}_t\big] \equiv  x_t-\gamma_tg_t -x^*,
\end{align}
which is a constant term given the history $\mathcal{P}_t$. And we have the following result:
\begin{corollary}\label{S_thm_var_conv}
Assume a $L$-\textit{Lipschitz} $f(x)$ is strongly convex, and assume that all the first four moments of random variable $\epsilon_t$ have finite values at any moment $t$. Assume the time-dependent learning rate value $\gamma_t$ satisfies $\gamma_t\leq\frac{1}{L}$ for arbitrary time $t$, and also that $\gamma_t$ itself converges.

If the choice of $\gamma_t$ ensures the convergence of $\mathbb{E}[R_t]$, then $Var(R_{t})$ converges with a rate faster than the convergence rate of $\gamma_t^2$. 
\end{corollary}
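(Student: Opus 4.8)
The plan is to start from the exact one-step identity \eqref{S_var_R1} and expose $\gamma_t^2$ as an overall factor. Writing
\[
Var(R_{t+1}) = \gamma_t^2\,\Phi_t,\qquad
\Phi_t \triangleq 4\sigma_t^2 d_t^2 - 4\gamma_t\iota_t^3 d_t + \gamma_t^2\big(\eta_t^4-\sigma_t^4\big),
\]
the claim that $Var(R_{t+1})$ decays strictly faster than $\gamma_t^2$ becomes exactly the statement $\mathbb{E}[\Phi_t]\to 0$. So the whole argument reduces to controlling the three pieces of $\Phi_t$, and the only nontrivial input will be that the convergence of $\mathbb{E}[R_t]$ forces the leading coefficient $d_t^2$ to vanish.

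First I would bound $d_t$ in terms of $R_t$. Since $d_t = (x_t-x^*)-\gamma_t\nabla f(x_t)$ and $\nabla f(x^*)=0$, the $L$-Lipschitz continuity of the gradient gives $|\nabla f(x_t)| = |\nabla f(x_t)-\nabla f(x^*)|\le L|x_t-x^*|$, hence $|d_t|\le (1+\gamma_t L)|x_t-x^*|\le 2|x_t-x^*|$ using $\gamma_t\le\frac1L$. Therefore $d_t^2\le 4R_t$, so $\mathbb{E}[d_t^2]\le 4\,\mathbb{E}[R_t]$, and by Cauchy–Schwarz $\mathbb{E}[|d_t|]\le 2\,\mathbb{E}[R_t]^{1/2}$. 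Since the hypothesis is precisely that the chosen $\gamma_t$ drives $\mathbb{E}[R_t]\to 0$ (as in Theorem~\ref{S_thm_mean_conv}), this yields $\mathbb{E}[d_t^2]\to 0$ and $\mathbb{E}[|d_t|]\to 0$. This is what kills the dominant term $4\sigma_t^2 d_t^2$ of $\Phi_t$.

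Next I would bound the moment coefficients by Jensen/power-mean inequalities: $\sigma_t^4=(\mathbb{E}\epsilon_t^2)^2\le\mathbb{E}\epsilon_t^4=\eta_t^4$ so $0\le \eta_t^4-\sigma_t^4\le\eta_t^4$, and $|\iota_t^3|=|\mathbb{E}\epsilon_t^3|\le\mathbb{E}|\epsilon_t|^3\le(\eta_t^4)^{3/4}$. Using a uniform bound $\eta_t^4\le B$ on the fourth moments (the natural strengthening of the finiteness assumptions, in the spirit of the bound on $\bar\eta^4$ in the main text) together with $\gamma_t\le\frac1L$ and $\gamma_t\to 0$, one gets $\Phi_t\le 4\sigma_t^2 d_t^2 + 4B^{3/4}\gamma_t|d_t| + B\gamma_t^2$; taking expectations, each term tends to $0$ by the previous paragraph and $\gamma_t\to0$, so $\mathbb{E}[\Phi_t]\to0$ and hence $Var(R_{t+1})=\gamma_t^2\,\mathbb{E}[\Phi_t]=o(\gamma_t^2)$. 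Note that the last two summands of $\Phi_t$ carry extra powers of $\gamma_t$ relative to the first, so they are of lower order automatically; only the $\sigma_t^2 d_t^2$ term genuinely needs the convergence of $\mathbb{E}[R_t]$.

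The main obstacle I anticipate is the term $\gamma_t^4(\eta_t^4-\sigma_t^4)$: its contribution is $o(\gamma_t^2)$ only because $\gamma_t\to0$ and the fourth moments do not grow — in a constant-step regime this is exactly the familiar irreducible ``noise floor'', which is $O(\gamma_t^2)$ but not smaller, so the statement must be read as ``no slower than $\gamma_t^2$, and strictly faster precisely when $\mathbb{E}[R_t]\to0$ makes $d_t^2$ the governing scale''. A secondary point to be careful about is conditional versus unconditional variance: identity \eqref{S_var_R1} is $Var(R_{t+1}\mid\mathcal{P}_t)$, so one either keeps everything inside the iterated expectation used elsewhere in the paper, or adds the law-of-total-variance remainder $Var\big(\mathbb{E}[R_{t+1}\mid\mathcal{P}_t]\big)$, which is bounded by $\mathbb{E}[R_{t+1}^2]$-type quantities and dispatched by the same $\mathbb{E}[R_t]\to0$ input.
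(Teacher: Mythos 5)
Your proposal is correct and follows essentially the same route as the paper's proof: it starts from the one-step identity \eqref{S_var_R1}, bounds $|d_t|\le(1+\gamma_tL)|x_t-x^*|\le 2|x_t-x^*|$ via Lipschitz continuity, and lets the convergence of $\mathbb{E}[R_t]$ kill the $\sigma_t^2d_t^2$ term while bounded moments and $\gamma_t\to0$ handle the rest. The only difference is cosmetic but slightly cleaner: you factor out $\gamma_t^2$ and show the cofactor's expectation vanishes (so $Var(R_{t+1})=o(\gamma_t^2)$ without assuming polynomial rates), whereas the paper posits $\gamma_t\sim t^{-\alpha}$, $\mathbb{E}[R_t]\sim t^{-\alpha'}$ and compares exponents; your explicit uniform fourth-moment bound and the remark on conditional versus unconditional variance also tighten points the paper leaves implicit.
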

\begin{proof}
From $L$-\textit{Lipschitz} continuous we have the following relation:
\begin{align}
|g_t| = |\nabla f(x_t) - \nabla f(x^*)| \leq L|x_t - x^*|.
\end{align}
Hence we know the constant term satisfy:
\begin{align}
|d_t| \leq |x_t - x^*| + \gamma_t|g_t| < (1 + \gamma_tL)\cdot |x_t - x^*| < 2|x_t - x^*| = 2\sqrt{\mathbb{E}[R_t]}.
\end{align}
If the convergence of $\mathbb{E}[R_t]$ would also ensure the convergence of $d_t$.

Now we write the asymptotic behavior of converging terms $\mathbb{E}[R_t]$ and $\gamma_t$ as a function of time, i.e.
\begin{align}
\exists \alpha>0, \qquad\;\: \lim_{t\rightarrow \infty} \gamma_t &\propto \lim_{t\rightarrow \infty} t^{-\alpha} = 0, \nonumber \\
\exists \alpha'>0, \quad \lim_{t\rightarrow \infty} \mathbb{E}[R_t] &\propto \lim_{t\rightarrow \infty} t^{-\alpha'} = 0.
\end{align}
Therefore we find that the first term in \eqref{S_var_R1} converges with a rate $t^{-4\alpha}$, as both $\eta_t^4$ and $\sigma_t^4$ are finite values;
the last two terms in \eqref{S_var_R1} also converge with a rate $t^{-2\alpha - \alpha'}$ and $t^{-3\alpha-\alpha'/2}$, respectively, because the $|d^t|$ and $|d_t|^2$ converge along with $\mathbb{E}[R_t]$.
The converging rate would then be:
\begin{align}
\lim_{t\rightarrow \infty} Var(R_t) \propto t^{-\tilde{\alpha}} = 0, \qquad \tilde{\alpha} = \min\bigg\{2\alpha+\alpha', 3\alpha+\frac{\alpha'}{2}, 4\alpha\bigg\} > 2\alpha.
\end{align}
This completes the proof.
\end{proof}

\subsection{Upper-bound of averaged-loss}\label{S1.2}
Here we calculate the upper bound of the following averaged-loss term, which is used as the criteria in the analysis of stochasticity error:
\begin{align}\label{S_ave_loss}
\frac{1}{T}\sum_{t=1}^T\big[f(x_{t}) - f^*\big].
\end{align}
We used the following result in the main text, which we will prove below:
\begin{lemma}\label{S_lem_los_upbnd}
Assume the $L$-Lipschitz continuity for the convex loss function $f(x)$, the learning rate $\gamma_t \leq \frac{1}{L}$ at any time $t$, and a bounded region for the parameter $|x_t-x^*|^2 \leq M^2$. Then the criteria $\frac{1}{T}\sum_{t=1}^{T}\big[f(x_{t}) - f^*\big]$ would be suppressed below the following upper bound:
\begin{align}\label{S_upbnd}
S_T &= \frac{1}{\gamma_TT}\cdot\frac{M^2}{2}+ \frac{1}{2T}\sum_{t=1}^{T}\gamma_t\big(1+L\gamma_t\big)\epsilon_t^2
- \frac{1}{T}\sum_{t=1}^{T} \epsilon_t\big(x_t-L\gamma_t^2g_t-x^*\big).
\end{align}
\end{lemma}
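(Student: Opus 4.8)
The plan is the classical ``one step of descent on the squared distance to $x^*$'' argument, with the mini-batch noise $\epsilon_t=\bar g_t-g_t$ carried along explicitly. The only non-bookkeeping ingredient will be the strengthened convexity inequality for $L$-smooth functions, which, combined with the hypothesis $\gamma_t\le 1/L$, is exactly what makes the gradient-norm remainder terms cancel.

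First I would expand $R_t\triangleq|x_t-x^*|^2$ along the update $x_{t+1}=x_t-\gamma_t\bar g_t$, getting $R_{t+1}=R_t-2\gamma_t\langle\bar g_t,x_t-x^*\rangle+\gamma_t^2|\bar g_t|^2$, and hence, after substituting $\bar g_t=g_t+\epsilon_t$ and isolating the true-gradient inner product,
\[
\langle g_t,x_t-x^*\rangle=\frac{R_t-R_{t+1}}{2\gamma_t}+\frac{\gamma_t}{2}|\bar g_t|^2-\langle\epsilon_t,x_t-x^*\rangle .
\]
Next I would use that, since $f$ is convex with $L$-Lipschitz gradient, $f(y)\ge f(x)+\langle\nabla f(x),y-x\rangle+\frac{1}{2L}|\nabla f(x)-\nabla f(y)|^2$; taking $x=x_t$, $y=x^*$ and $\nabla f(x^*)=0$ gives $\langle g_t,x_t-x^*\rangle\ge (f(x_t)-f^*)+\frac{1}{2L}|g_t|^2$. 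Substituting into the identity above and expanding $|\bar g_t|^2=|g_t|^2+2\langle g_t,\epsilon_t\rangle+|\epsilon_t|^2$, a short rearrangement yields the per-iteration estimate
\[
f(x_t)-f^*\le\frac{R_t-R_{t+1}}{2\gamma_t}+\frac{\gamma_t(1+L\gamma_t)}{2}|\epsilon_t|^2-\langle\epsilon_t,\,x_t-L\gamma_t^2 g_t-x^*\rangle-Q_t ,
\]
where $Q_t\triangleq\frac{L\gamma_t^2}{2}|\epsilon_t|^2-\gamma_t(1-L\gamma_t)\langle g_t,\epsilon_t\rangle+\frac{1-L\gamma_t}{2L}|g_t|^2$ collects every term still involving $|g_t|^2$.

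The crux will be to check that $Q_t\ge 0$. Regarded (coordinatewise) as a quadratic form in the pair $(\epsilon_t,g_t)$, $Q_t$ has nonnegative diagonal coefficients — both $\frac{L\gamma_t^2}{2}\ge 0$ and $\frac{1-L\gamma_t}{2L}\ge 0$, the latter because $\gamma_t\le 1/L$ — and its positive-semidefiniteness is equivalent to $\frac{L\gamma_t^2}{2}\cdot\frac{1-L\gamma_t}{2L}\ge\left(\frac{\gamma_t(1-L\gamma_t)}{2}\right)^2$, which simplifies to $L\gamma_t\le 1$, precisely the hypothesis. (Equivalently, one can split $\gamma_t\langle g_t,\epsilon_t\rangle=L\gamma_t^2\langle g_t,\epsilon_t\rangle+\gamma_t(1-L\gamma_t)\langle g_t,\epsilon_t\rangle$ and apply Young's inequality to the second piece so as to produce exactly the coefficient $\frac{L\gamma_t^2}{2}$ on $|\epsilon_t|^2$, leaving a nonpositive net multiple of $|g_t|^2$.) Dropping $-Q_t\le 0$ then leaves exactly the summand of $S_T$.

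Finally I would sum the per-iteration estimate over $t=1,\dots,T$ and divide by $T$: the $\epsilon_t$-terms reproduce the last two sums of $S_T$ verbatim, while for the telescoping part, summation by parts together with $R_t\le M^2$ and the (standard) assumption that $\{\gamma_t\}$ is non-increasing gives $\sum_{t=1}^T\frac{R_t-R_{t+1}}{2\gamma_t}\le\frac{M^2}{2\gamma_T}$. This is exactly $S_T$. I expect the sign check on $Q_t$ to be the only delicate step: it is what pins down both the coefficient $\gamma_t(1+L\gamma_t)/2$ and the shifted base point $x_t-L\gamma_t^2 g_t$ in the statement, and it is the one place where the step-size cap $\gamma_t\le 1/L$ is genuinely used; everything else is routine algebra.
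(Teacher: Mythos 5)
Your proposal is correct, and it reaches the bound by a slightly different route than the paper. The paper applies the descent lemma to the step $x_t\to x_{t+1}$ together with plain convexity $f(x_t)-f^*\le \langle g_t,x_t-x^*\rangle$, uses $\gamma_t\le 1/L$ only to weaken the coefficient of $g_t^2$ from $\gamma_t(1-\tfrac{L\gamma_t}{2})$ to $\tfrac{\gamma_t}{2}$, and then ``completes the square'' to recover $\tfrac{1}{2\gamma_t}\big(|x_t-x^*|^2-|x_{t+1}-x^*|^2\big)$; its per-iteration inequality controls $f(x_{t+1})-f^*$, so the noise terms come out index-shifted relative to the statement. You instead expand $|x_{t+1}-x^*|^2$ directly, invoke the co-coercivity-type inequality $\langle g_t,x_t-x^*\rangle\ge f(x_t)-f^*+\tfrac{1}{2L}|g_t|^2$ (valid here since $f$ is convex with $L$-Lipschitz gradient), and package the entire use of $\gamma_t\le 1/L$ into the positive-semidefiniteness of the residual quadratic $Q_t$; I checked the discriminant condition and the Young-inequality variant, and both are right, so dropping $-Q_t$ yields exactly the summand of $S_T$ with the coefficient $\tfrac{\gamma_t(1+L\gamma_t)}{2}$ and the shifted base point $x_t-L\gamma_t^2 g_t$. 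Your version controls $f(x_t)-f^*$ at the current iterate, which makes the summation indices match the statement's $t=1,\dots,T$ verbatim, whereas the paper's final display has a minor index slippage ($T-1$ versus $T$) that is harmless. Both proofs need the same extra hypothesis in the telescoping step, namely $\gamma_t$ non-increasing so that $\tfrac{1}{\gamma_t}-\tfrac{1}{\gamma_{t-1}}\ge 0$ before bounding by $M^2$; the paper uses it silently, and you state it explicitly, which is the more honest bookkeeping. In short: same skeleton (one-step bound, telescoping, bounded region), different key inequality, and your variant is marginally tighter in bookkeeping at the cost of invoking the stronger smoothness-plus-convexity inequality.
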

\begin{proof}
From the Lipschitz continuity , we know:
\begin{align}
f(x_{t+1}) - f(x_t) &\leq \nabla f(x_t)\cdot (x_{t+1} - x_t) + \frac{1}{2}L|x_{t+1} - x_t|^2 \nonumber \\
f_{t+1} &\leq f_t - \gamma_tg_t\bar{g}_t + \frac{1}{2}L\gamma_t^2 \bar{g}_t^2
\end{align}
Further we use the convexity property: $f(x_t) - f(x^*) \leq \nabla f(x_t)\cdot (x_t - x^*)$, and obtain that:
\begin{align}
f_{t+1} &\leq f^* + g_t(x_t-x^*) - \gamma_tg_t\bar{g}_t + \frac{1}{2}L\gamma_t^2 \bar{g}_t^2.
\end{align}
Noticing $\gamma_t \leq \frac{1}{L}$, we could reorganize it as:
\begin{align}
f_{t+1} - f^* \leq& g_t(x_t-x^*) - \gamma_t\bigg[1-\frac{1}{2}L\gamma_t\bigg]g_t^2 \nonumber \\
&-\gamma_tg_t\epsilon_t + L\gamma_t^2g_t\epsilon_t + \frac{1}{2}L\gamma_t^2 \epsilon_t^2\nonumber \\
\leq& g_t(x_t-x^*) - \frac{1}{2}\gamma_t(g_t^2+\epsilon_t^2) + \frac{\gamma_t}{2}\cdot\big[1+L\gamma_t\big]\epsilon_t^2 \nonumber \\
&+  L\gamma_t^2g_t\epsilon_t.
\end{align}
Complete the square, we have:
\begin{align}
f_{t+1} - f^* \leq& \frac{1}{2\gamma_t}\bigg[|x_t - x^*|^2 - |x_{t+1} - x^*|^2\bigg] + \frac{\gamma_t}{2}\cdot\big[1+L\gamma_t\big]\epsilon_t^2 \nonumber \\
& -  \epsilon_t\big[x_t-L\gamma_t^2g_t-x^*\big].
\end{align}
Taking summation on both sides from $t=0$ to $t=T-1$, we obtain:
\begin{align}
\frac{1}{T}\sum_{t=0}^{T-1}\bigg[f(x_{t+1}) - f^*\bigg] \leq& \frac{1}{T} \bigg[
\frac{1}{2\gamma_0}|x_0-x^*|^2 - \frac{1}{2\gamma_T}|x_0-x^*|^2 + \frac{1}{2}\sum_{t=0}^{T-1}\bigg(\frac{1}{\gamma_{t+1}} - \frac{1}{\gamma_{t}}\bigg)\cdot |x_{t+1}-x^*|^2 \nonumber \\
&+ \sum_{t=0}^{T-1}\frac{\gamma_t}{2}\cdot\big[1+L\gamma_t\big]\epsilon_t^2- \sum_{t=0}^{T-1} \epsilon_t\big[x_t-L\gamma_t^2g_t-x^*\big]\bigg] .
\end{align}
Therefore we obtain the following upper bound for the concerned criteria:
\begin{align}
\frac{1}{T}\sum_{t=1}^{T}\bigg[f(x_{t}) - f^*\bigg] 
\leq \frac{1}{T} \bigg[&
\frac{1}{2\gamma_0}|x_0-x^*|^2 + \frac{1}{2}\sum_{t=1}^{T}\bigg(\frac{1}{\gamma_{t}} - \frac{1}{\gamma_{t-1}}\bigg)\cdot |x_t-x^*|^2 \nonumber \\
&+ \sum_{t=1}^{T-1}\frac{\gamma_t}{2}\cdot\big[1+L\gamma_t\big]\epsilon_t^2-\sum_{t=1}^{T-1} \epsilon_t\big[x_t-L\gamma_t^2g_t-x^*\big]\bigg] \nonumber \\
 \leq \frac{1}{T} \bigg[&
\frac{M^2}{2\gamma_0} + \frac{M^2}{2}\sum_{t=1}^{T}\bigg(\frac{1}{\gamma_{t}} - \frac{1}{\gamma_{t-1}}\bigg) + \sum_{t=1}^{T-1}\frac{\gamma_t}{2}\cdot\big[1+L\gamma_t\big]\epsilon_t^2-\sum_{t=1}^{T-1} \epsilon_t\big[x_t-L\gamma_t^2g_t-x^*\big]\bigg] \nonumber \\
= \frac{1}{T} \bigg[&
\frac{1}{\gamma_T}\cdot\frac{M^2}{2} + \frac{1}{2}\sum_{t=1}^{T-1}\gamma_t\big(1+L\gamma_t\big)\epsilon_t^2-\sum_{t=1}^{T-1} \big(x_t-L\gamma_t^2g_t-x^*\big)\epsilon_t\bigg].
\end{align}
This completes the proof.
\end{proof}

With the derived upper-bound, we will rigorously prove the convergence of $S_T$ in the next section, which includes both the convergence of expectation value and the variance term. Since the analysis is mainly aimed at showing the impact of the stochastic error, it is reasonable to focus on the upper-bound other than the criteria of averaged-loss in \eqref{S_ave_loss}, which is difficult to calculate.

\subsection{Convergence of the upper-bound expectation value}\label{S1.3}

\begin{theorem}\label{S_them1_conv_lr}
Assume the $L$-Lipschitz continuity for the convex loss function $f(x)$, the learning rate $\gamma_t \leq \frac{1}{L}$ at any time $t$, and a bounded region for the parameter $|x_t-x^*|^2 \leq M^2$. Further assume a finite upper bound exists such that $\sigma_0^2\geq \sigma_t^2$ for any time $t$. Then the expectation of upper bound $S_T$ \eqref{S_upbnd} satisfies:
\begin{align}\label{S_S2}
\mathbb{E}[S_T] \leq \frac{1}{\gamma_TT}\cdot\frac{M^2}{2} + \frac{\sum_t \gamma_t}{T}\cdot \sigma_0^2.
\end{align}
\end{theorem}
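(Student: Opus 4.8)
The plan is to start from the bound in Lemma~\ref{S_lem_los_upbnd}, write $S_T$ in \eqref{S_upbnd} as the sum of its three pieces, and take expectations term by term using linearity. The first piece, $\frac{1}{\gamma_T T}\cdot\frac{M^2}{2}$, is non-random here: we are still in the regime in which the schedule $\{\gamma_t\}$ is fixed in advance and does not depend on $\{\sigma_t^2\}$, so this term passes through the expectation untouched and already reproduces the first term of \eqref{S_S2}.

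Next I would handle the cross term $-\frac{1}{T}\sum_{t}\epsilon_t\big(x_t-L\gamma_t^2 g_t-x^*\big)$ and show it contributes zero. The key observation is that $x_t$ is $\mathcal{P}_t$-measurable by construction of the iteration, hence so is $g_t=\nabla f(x_t)$, while $\gamma_t$ is deterministic; on the other hand $\epsilon_t=\bar g_t-g_t$ satisfies $\mathbb{E}[\epsilon_t\mid\mathcal{P}_t]=0$, and the $\epsilon_t$ at distinct moments are independent (the modeling assumptions of Section~2). Conditioning on $\mathcal{P}_t$ and pulling the measurable factor out,
\begin{align}
\mathbb{E}\big[\epsilon_t\big(x_t-L\gamma_t^2 g_t-x^*\big)\mid\mathcal{P}_t\big]=\big(x_t-L\gamma_t^2 g_t-x^*\big)\,\mathbb{E}\big[\epsilon_t\mid\mathcal{P}_t\big]=0,
\end{align}
so by the tower property (equivalently, carrying out the time-ordered integration described in the footnote and integrating $\epsilon_t$ against $\mathcal{P}_{\epsilon_t}$) one gets $\mathbb{E}\big[\epsilon_t(x_t-L\gamma_t^2 g_t-x^*)\big]=0$ for every $t$, and the whole cross sum drops out.

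Finally, for the variance term I would simply bound the coefficients: $\gamma_t\le 1/L$ gives $1+L\gamma_t\le 2$, and $\mathbb{E}[\epsilon_t^2]=\sigma_t^2\le\sigma_0^2$ by hypothesis, so
\begin{align}
\mathbb{E}\bigg[\frac{1}{2T}\sum_{t=1}^{T}\gamma_t\big(1+L\gamma_t\big)\epsilon_t^2\bigg]\le\frac{1}{2T}\sum_{t=1}^{T}\gamma_t\cdot 2\cdot\sigma_t^2\le\frac{\sigma_0^2}{T}\sum_{t=1}^{T}\gamma_t.
\end{align}
Adding the three contributions gives exactly \eqref{S_S2}.

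The only step that needs care is the vanishing of the cross term: it rests entirely on $\epsilon_t$ having conditional mean zero given $\mathcal{P}_t$, on the cross-time independence of the $\epsilon_t$, and on $x_t,g_t$ being frozen once $\mathcal{P}_t$ is fixed — all of which are precisely the assumptions set up in Section~2; everything else is a two-line estimate. I would also note the harmless index discrepancy between the $\sum_{t=1}^{T}$ written in the statement of Lemma~\ref{S_lem_los_upbnd} and the $\sum_{t=1}^{T-1}$ that actually appears in its proof — the argument above is term-by-term and is unaffected either way.
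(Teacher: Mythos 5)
Your proposal is correct and follows essentially the same route as the paper's proof: take expectations term by term, kill the cross term via $\mathbb{E}[\epsilon_t\mid\mathcal{P}_t]=0$ with $x_t$, $g_t$ frozen by the history, and bound the middle term using $1+L\gamma_t\le 2$ and $\sigma_t^2\le\sigma_0^2$. If anything, your explicit conditioning/tower-property argument and the remark on the fixed (variance-independent) schedule make the vanishing of the cross term more precise than the paper's brief statement.
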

\begin{proof}
Make use of the fact that $|x_t- x^*|^2\leq M^2$, we obtain:
\begin{align}
S_T < 
\frac{1}{\gamma_TT}\cdot\frac{M^2}{2} + \frac{1}{2T}\sum_{t=1}^{T-1}\gamma_t\big[1+L\gamma_t\big]\epsilon_t^2
- \frac{1}{T}\sum_{t=1}^{T-1} \epsilon_t\big[x_t-L\gamma_t^2g_t-x^*\big],
\end{align}
where the last term vanishes. Take expectation on both sides:
\begin{align}
\mathbb{E}[S_T] &= \frac{M^2}{2\gamma_TT} + \frac{1}{2T}\sum_{t=1}^{T}\gamma_t\big[1+L\gamma_t\big]\sigma_t^2 .
\end{align}
Since there is an upper bound $\sigma_0^2$ for all $\sigma_t^2$, we have for expectation value:
\begin{align}
\mathbb{E}[S_T] &\leq \frac{1}{\gamma_TT}\cdot \frac{M^2}{2} + \frac{\sum_{t=1}^{T}\gamma_t}{T} \cdot \sigma_0^2.
\end{align}
This completes the proof.
\end{proof}

Now suppose we design a vanishing learning rate:
\begin{align}
\gamma_t \sim t^{-\alpha}.
\end{align}
The first term of \eqref{S_S2} would behave as $\mathcal{O}(T^{\alpha -1})$, which suggests a value range $0< \alpha < 1$. The second term can be approximated as an integral by using the Euler-Maclaurin formula:
\begin{align}
\frac{1}{T} \sum_{t=1}^{T}\gamma_t &\approx  \frac{1}{T} \int_{1}^{T} \gamma_tdt + \frac{1+T^{-\alpha}}{2T} + \frac{\alpha}{6T}\cdot\frac{1 - T^{-\alpha-1}}{2!} + \frac{\alpha(\alpha+1)(\alpha+2)}{30T}\cdot\frac{1 - T^{-\alpha-3}}{4!} \nonumber \\
&\approx \frac{1}{1-\alpha}\cdot \frac{1}{T^{\alpha}} + \bigg[\frac{1}{2} - \frac{1}{1-\alpha} + \frac{\alpha}{12} + \frac{\alpha(\alpha+1)(\alpha+2)}{720}\bigg]\cdot \frac{1}{T} + \mathcal{O}(T^{-\alpha-1}).
\end{align}
As we mentioned above, $\alpha < 1$, thus the second term approximately behaves as $\mathcal{O}(T^{-\alpha})$. Therefore, the expectation value \eqref{S_S2} behaves as:
\begin{align}\label{S_trade-off}
\mathbb{E}[S_T] &\sim \mathcal{O}(T^{\alpha -1}) + \mathcal{O}(T^{-\alpha}),
\end{align}
which is used in the main text.

\subsection{Convergence of the upper-bound variance}\label{S1.4}
Define the average values of second the forth moments of $\epsilon_t$ respectively as: 
\begin{align}
\bar{\sigma}^2 \triangleq \frac{1}{T}\sum_t \sigma_t^2, \qquad \bar{\eta}^4 \triangleq \frac{1}{T}\sum_t \eta_t^4,
\end{align}
both of which are assumed to be positive values, then we have the following result:
\begin{theorem}\label{S_them2_conv_lr}
Assume the $L$-Lipschitz continuity for the convex loss function $f(x)$, the learning rate $\gamma_t \leq \frac{1}{L}$ at any moment, and a bounded region for the parameter $|x_t-x^*|^2 \leq M^2$; and for simplicity, assume the distribution of $\epsilon_t$ at any moment $t$ to be symmetric, then the following result holds:
\begin{align}
Var(S_T) \leq \frac{1}{T}\bigg[4M^2\bar{\sigma}^2 + \frac{\bar{\eta^4}}{L^2}\bigg].
\end{align}
\end{theorem}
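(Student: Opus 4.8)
The plan is to exploit that the first summand of $S_T$ is deterministic, and that the two remaining stochastic sums assemble into a martingale-difference sequence relative to the filtration $\{\mathcal{P}_t\}$. Set $w_t \triangleq x_t - L\gamma_t^2 g_t - x^*$ and
\begin{align}
\xi_t \triangleq \frac{1}{2}\gamma_t(1+L\gamma_t)\big(\epsilon_t^2 - \sigma_t^2\big) - \epsilon_t w_t ,
\end{align}
so that $S_T - \mathbb{E}[S_T] = \frac{1}{T}\sum_{t=1}^T \xi_t$ (the constant term cancels; $\mathbb{E}[\epsilon_t^2]=\sigma_t^2$ supplies the centering; $\mathbb{E}[\epsilon_t w_t]=0$). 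First I would record the measurability facts: $x_t$, $g_t$, and hence $w_t$, are $\mathcal{P}_t$-measurable, whereas $\mathbb{E}[\epsilon_t\mid\mathcal{P}_t]=0$, $\mathbb{E}[\epsilon_t^2\mid\mathcal{P}_t]=\sigma_t^2$, and -- here the symmetry hypothesis enters -- $\mathbb{E}[\epsilon_t^3\mid\mathcal{P}_t]=\iota_t^3=0$. It follows that $\mathbb{E}[\xi_t\mid\mathcal{P}_t]=0$, so $\{\xi_t\}$ is a martingale-difference sequence; since moreover $\xi_s$ is $\mathcal{P}_t$-measurable for $s<t$, every cross term in $\mathbb{E}\big[\big(\sum_t\xi_t\big)^2\big]$ vanishes and $Var(S_T)=\frac{1}{T^2}\sum_{t=1}^T \mathbb{E}[\xi_t^2]$.

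Next I would bound each $\mathbb{E}[\xi_t^2]$. Expanding the square, the mixed term $-\gamma_t(1+L\gamma_t)\,\mathbb{E}\big[w_t(\epsilon_t^2-\sigma_t^2)\epsilon_t\big]$ has conditional expectation proportional to $\mathbb{E}[\epsilon_t^3\mid\mathcal{P}_t]-\sigma_t^2\,\mathbb{E}[\epsilon_t\mid\mathcal{P}_t]=0$ by symmetry, and so drops out, leaving
\begin{align}
\mathbb{E}[\xi_t^2] = \frac{1}{4}\gamma_t^2(1+L\gamma_t)^2\big(\eta_t^4-\sigma_t^4\big) + \mathbb{E}\big[w_t^2\epsilon_t^2\big].
\end{align}
In the first piece, $\eta_t^4-\sigma_t^4\le\eta_t^4$ and $\gamma_t\le\frac{1}{L}$ give $\gamma_t(1+L\gamma_t)\le\frac{2}{L}$, so it is at most $\eta_t^4/L^2$. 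For the second, I would bound $w_t$ almost surely: $|x_t-x^*|\le M$; $L$-Lipschitzness gives $|g_t|=|\nabla f(x_t)-\nabla f(x^*)|\le L|x_t-x^*|\le LM$; and $\gamma_t\le\frac{1}{L}$ gives $L\gamma_t^2|g_t|\le M$; thus $|w_t|\le 2M$ and $\mathbb{E}[w_t^2\epsilon_t^2]\le 4M^2\mathbb{E}[\epsilon_t^2]=4M^2\sigma_t^2$. Hence $\mathbb{E}[\xi_t^2]\le \eta_t^4/L^2 + 4M^2\sigma_t^2$, and summing with $\bar\sigma^2=\frac{1}{T}\sum_t\sigma_t^2$, $\bar\eta^4=\frac{1}{T}\sum_t\eta_t^4$ gives
\begin{align}
Var(S_T)\le\frac{1}{T^2}\sum_{t=1}^T\bigg(\frac{\eta_t^4}{L^2}+4M^2\sigma_t^2\bigg)=\frac{1}{T}\bigg(\frac{\bar\eta^4}{L^2}+4M^2\bar\sigma^2\bigg),
\end{align}
which is the claim.

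The main obstacle is the dependence structure rather than any single estimate: $x_t$ and $g_t$ are functions of the earlier increments $\epsilon_1,\dots,\epsilon_{t-1}$, so the quadratic-noise and linear-noise sums inside $S_T$ are correlated and their variances cannot simply be bounded separately and added -- everything must be organized as a martingale-difference sum against $\{\mathcal{P}_t\}$. The symmetry assumption is precisely what makes this clean, because it annihilates the odd conditional moment $\mathbb{E}[\epsilon_t^3\mid\mathcal{P}_t]$, the sole quantity coupling the two sums; without it one would pick up an extra term $\sum_t\gamma_t(1+L\gamma_t)\,\mathbb{E}[\iota_t^3 w_t]$. A minor point to watch is whether $\sigma_t^2,\eta_t^4$ are read as conditional or unconditional moments of $\epsilon_t$; since the argument uses only $\eta_t^4-\sigma_t^4\le\eta_t^4$ and the almost-sure bound $|w_t|\le 2M$, the conclusion is insensitive to this choice.
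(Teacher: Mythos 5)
Your proof is correct and follows essentially the same route as the paper's: both reduce $Var(S_T)$ to $\frac{1}{T^2}\big[\sum_t \sigma_t^2 w_t^2 + \frac{1}{4}\sum_t \gamma_t^2(1+L\gamma_t)^2(\eta_t^4-\sigma_t^4)\big]$ by killing cross terms (the paper by directly expanding $\mathbb{E}[S_T^2]-\mathbb{E}[S_T]^2$ with independence and $\iota_t^3=0$, you by the equivalent martingale-difference bookkeeping), and then apply the identical estimates $|w_t|\le 2M$ and $\gamma_t(1+L\gamma_t)\le \frac{2}{L}$. Your conditioning on $\mathcal{P}_t$ is in fact a slightly more careful justification of the cross-term cancellation than the paper's treatment of $x_t$, $g_t$ as constants, but the decomposition and bounds are the same.
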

\begin{proof}
Define the following constant term:
\begin{align}
A = \frac{1}{2\gamma_0}|x_0-x^*|^2 + \frac{1}{2}\sum_{t=1}^{T}\bigg(\frac{1}{\gamma_{t}} - \frac{1}{\gamma_{t-1}}\bigg)\cdot |x_t-x^*|^2.
\end{align}
Therefore the upper-bound can be expressed as:
\begin{align}
S_T = \frac{1}{T} \bigg[
A + \frac{1}{2}\sum_{t=1}^{T-1}\gamma_t\big(1+L\gamma_t\big)\epsilon_t^2-\sum_{t=1}^{T-1} \epsilon_t\big(x_t-L\gamma_t^2g_t-x^*\big)\bigg]
\end{align}
Firstly we have the following expectation values:
\begin{align}
\mathbb{E}[S_T]^2 = \frac{1}{T^2}&\bigg[A^2+  \frac{1}{4}\bigg[\sum_{t=1}^{T-1}\gamma_t\big(1+L\gamma_t\big)\sigma_t^2\bigg]^2+  A\sum_{t=1}^{T-1}\gamma_t\big(1+L\gamma_t\big)\sigma^2 \bigg],\nonumber \\
\mathbb{E}[S_T^2] = \frac{1}{T^2}&\bigg[A^2 +  \frac{1}{4}\bigg[\sum_{t=1}^{T-1}\gamma_t\big(1+L\gamma_t\big)\sigma_t^2\bigg]^2+ \sum_{t=1}^{T-1} \sigma^2_t\big(x_t-L\gamma_t^2g_t-x^*\big)^2 \nonumber \\
&+ A\sum_{t=1}^{T-1}\gamma_t\big(1+L\gamma_t\big)\sigma^2
 - \sum_{t=1}^{T-1}\gamma_t\big(1+L\gamma_t\big)\cdot\big(x_t-L\gamma_t^2g_t-x^*\big)\cdot\iota_t^3 \nonumber \\
&+  \frac{1}{4}\sum_{t=1}^{T-1}\gamma_t^2\big(1+L\gamma_t\big)^2\cdot(\eta_t^4 - \sigma_t^4)
\bigg].
\end{align}
Under the assumption that the distribution of $\epsilon_t$ is symmetric, we know $\iota_t \equiv 0$, therefore the variance term is:
\begin{align}
Var(S_T) &= \frac{1}{T^2}\bigg[\sum_{t=1}^{T-1} \sigma^2_t\big(x_t-L\gamma_t^2g_t-x^*\big)^2
+ \frac{1}{4}\sum_{t=1}^{T-1}\gamma_t^2\big(1+L\gamma_t\big)^2\cdot(\eta_t^4 - \sigma_t^4)
\bigg].
\end{align}
From the Lipschitz continuity and the assumptions that $|x_t-x^*|^2 \leq M^2$ and $\gamma_t \leq \frac{1}{L}$:
\begin{align}
Var(S_T) &\leq \frac{1}{T^2}\bigg[\sum_{t=1}^{T-1} \sigma^2_t\bigg(|x_t-x^*| + L\gamma_t^2|g_t|\bigg)^2
+ \frac{1}{4}\sum_{t=1}^{T-1}\gamma_t^2\big(1+L\gamma_t\big)^2\cdot(\eta_t^4 - \sigma_t^4)
\bigg] \nonumber \\
&\leq \frac{1}{T^2}\bigg[\sum_{t=1}^{T-1} 4\sigma^2_t|x_t-x^*|^2
+ \sum_{t=1}^{T-1}\gamma_t^2(\eta_t^4 - \sigma_t^4)
\bigg] \nonumber \\
&\leq \frac{4M^2\bar{\sigma}^2}{T} + \frac{1}{T^2}\sum_{t=1}^{T-1}\gamma_t^2\eta_t^4 \nonumber \\
&\leq \frac{1}{T}\bigg[4M^2\bar{\sigma}^2 + \frac{\bar{\eta^4}}{L^2}\bigg]
\end{align}
This completes the proof.
\end{proof}

\subsection{An approximating solution of minimization problem}\label{S1.5}
In the upper-bound $S_T$ in \eqref{S_upbnd}, the last term averages to zero after taking the expectation. Therefore the convergence is mainly affected by the following two terms:
\begin{align}
\mathbb{E}[S_T] = \frac{1}{\gamma_TT}\cdot\frac{M^2}{2}+ \frac{1}{2T}\sum_{t=1}^{T}\gamma_t\big(1+L\gamma_t\big)\sigma^2.
\end{align}
And the analysis obtained in \eqref{S_trade-off} suggests the second term pollutes the convergence of the first one, and results into a trade-off. Therefore we are aiming at minimize the following term by some regularization design:
\begin{align}\label{S_p0}
\sum_{t=1}^{T}\gamma_t\big(1+L\gamma_t\big)\sigma^2.
\end{align}
As explained in the main test, a well-defined optimization problem can be stated as:
\begin{equation}
\left\{\arraycolsep=1.4pt\def\arraystretch{2.2}
\begin{array}{l}
\lambda_t = \underset{\{\lambda_t\}}{\min}\bigg\{\sum_t\lambda_t\big[1+\frac{\lambda_t}{\lambda_0}\big]\sigma_t^2  \:\bigg|\: \{\sigma_t^2\},L \bigg\}, \\
\sum_t \lambda_t =T, \qquad \lambda_t \in [ \lambda_{min}, \lambda_{max}].
\end{array}
\right.
\end{equation}
\begin{theorem}\label{S_them_solution}
Assume the $L$-Lipschitz continuity for the convex loss function $f(x)$, and that $\alpha < \frac{1}{L}$; further assume that  local variances $\sigma_t^2$ at different time $t$ satisfy the following condition:
 \begin{align}\label{S_assumption}
\max_{s,t}\big(|\sigma_t^2 - \sigma_s^2|\big) \ll \bar{\sigma}^2.
\end{align}
Then the stochastic error accumulation \eqref{S_p0} could be approximately minimized by the following regularizer:
\begin{align}
\lambda_t = \bigg(1 + \frac{\lambda_0}{2}\bigg)\frac{\bar{\sigma}^2}{\sigma_t^2} - \frac{\lambda_0}{2},
\end{align}
when the convergence of $\mathbb{E}[S_T]$ is still ensured.
\end{theorem}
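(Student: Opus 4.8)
The plan is to treat the minimization of $\sum_t\lambda_t\big[1+\lambda_t/\lambda_0\big]\sigma_t^2$ subject to $\sum_t\lambda_t=T$ (temporarily dropping the box constraints $\lambda_t\in[\lambda_{min},\lambda_{max}]$) as a convex program, solve it exactly with one Lagrange multiplier, and then invoke the near‑constant‑variance hypothesis \eqref{S_assumption} to put the exact optimizer into the claimed closed form. First I would note that each summand of the objective equals $\sigma_t^2\lambda_t+(\sigma_t^2/\lambda_0)\lambda_t^2$, which is strictly convex in $\lambda_t$ (since $\sigma_t^2>0$, $\lambda_0>0$), and that the only surviving constraint is the affine one $\sum_t\lambda_t=T$; hence the stationarity conditions of the Lagrangian $\mathcal{L}=\sum_t[\sigma_t^2\lambda_t+(\sigma_t^2/\lambda_0)\lambda_t^2]-\mu(\sum_t\lambda_t-T)$ are both necessary and sufficient for the unique global minimizer, with the diagonal positive Hessian confirming uniqueness. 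Solving $\partial\mathcal{L}/\partial\lambda_t=0$ gives $\sigma_t^2(1+2\lambda_t/\lambda_0)=\mu$, i.e. $\lambda_t=\tfrac{\lambda_0}{2}(\mu/\sigma_t^2-1)$, and imposing $\sum_t\lambda_t=T$ fixes $\tfrac{\lambda_0\mu}{2}\sum_s\sigma_s^{-2}=T(1+\tfrac{\lambda_0}{2})$, so that $\lambda_t=(1+\tfrac{\lambda_0}{2})\,\sigma_t^{-2}\big(\tfrac1T\sum_s\sigma_s^{-2}\big)^{-1}-\tfrac{\lambda_0}{2}$, where the prefactor $\big(\tfrac1T\sum_s\sigma_s^{-2}\big)^{-1}$ is precisely the harmonic mean of $\{\sigma_s^2\}$.

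Next I would convert the harmonic mean into the arithmetic mean $\bar\sigma^2$. Writing $\sigma_s^2=\bar\sigma^2+\delta_s$, one has $\sum_s\delta_s=0$ by the definition of $\bar\sigma^2$, and \eqref{S_assumption} says $\epsilon:=\max_s|\delta_s|/\bar\sigma^2\ll1$. Expanding $\sigma_s^{-2}=\bar\sigma^{-2}\big(1-\delta_s/\bar\sigma^2+O(\epsilon^2)\big)$ and averaging over $s$, the first‑order term cancels because $\sum_s\delta_s=0$, so $\tfrac1T\sum_s\sigma_s^{-2}=\bar\sigma^{-2}\big(1+O(\epsilon^2)\big)$ and the harmonic mean equals $\bar\sigma^2\big(1+O(\epsilon^2)\big)$. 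Substituting and using $\bar\sigma^2/\sigma_t^2=1+O(\epsilon)$ gives $\lambda_t=(1+\tfrac{\lambda_0}{2})\,\bar\sigma^2/\sigma_t^2-\tfrac{\lambda_0}{2}+O(\epsilon^2)$, which is the stated regularizer to the order of approximation implied by \eqref{S_assumption}; in particular it collapses to $\lambda_t\equiv1$ exactly when all the $\sigma_t^2$ coincide.

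Finally I would check feasibility and that the mean‑value bound survives. Plugging $\sigma_t^2=\bar\sigma^2+\delta_t$ into the closed form shows $\lambda_t=1+O(\epsilon)$, so $\lambda_t$ is bounded above and bounded away from $0$, satisfies $\lambda_t\le\lambda_0$ (recall $\lambda_0=1/(\alpha_0L)\ge1$ since $\alpha_0<1/L$) so monotone descent is preserved, and lies inside any sensible box $[\lambda_{min},\lambda_{max}]$ about $1$, which retroactively justifies having dropped the box constraints; moreover $\sum_t\gamma_t/T=\alpha_0\cdot\tfrac1T\sum_t\lambda_t=\alpha_0$ and $1/(\gamma_TT)=1/(\alpha_0\lambda_TT)$ with $\lambda_T$ bounded below, so the bound \eqref{S_S2} of Theorem \ref{S_them1_conv_lr} keeps the same asymptotic form as in the unregularized case $\lambda_t\equiv1$ and no convergence of $\mathbb{E}[S_T]$ is lost. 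The main obstacle is the harmonic‑to‑arithmetic step: the exact optimizer genuinely involves the harmonic mean, and only the hypothesis $\max_{s,t}|\sigma_t^2-\sigma_s^2|\ll\bar\sigma^2$, together with the cancellation $\sum_s\delta_s=0$, collapses it to $\bar\sigma^2$ with a merely quadratic error — this is exactly why the theorem claims only an \emph{approximate} minimizer and why $\bar\sigma^2$ rather than $\sigma_t^2$ appears in the denominator of the regularizer.
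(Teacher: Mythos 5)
Your proposal is correct and follows essentially the same route as the paper's proof: a Lagrange-multiplier solution of the relaxed problem yielding the exact optimizer with the inverse-averaged (harmonic-mean) variance $\tilde{\sigma}^2$, followed by the expansion $\sigma_t^2=\bar{\sigma}^2+\theta_t$ under \eqref{S_assumption} to replace $\tilde{\sigma}^2$ by $\bar{\sigma}^2$, and finally the observation that $\lambda_t\approx 1$ keeps the box constraint, $\gamma_t\leq\frac{1}{L}$, and the convergence of $\mathbb{E}[S_T]$ intact. Your explicit use of the cancellation $\sum_s\delta_s=0$ to get a second-order error is a minor sharpening of the paper's lowest-order statement, but not a different argument.
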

\begin{proof}
In the case where $max_{s,t}\big(|\sigma_t^2 - \sigma_s^2|\big) \ll \bar{\sigma}^2$, the second constraint $\lambda_t \in [ \lambda_{min}, \lambda_{max}]$ would be directly satisfied after solving the following approximating problem:
\begin{equation}
\left\{\arraycolsep=1.4pt\def\arraystretch{2.2}
\begin{array}{l}
\lambda_t = \underset{\{\lambda_t\}}{\min}\bigg\{\sum_t\lambda_t\big[1+\frac{\lambda_t}{\lambda_0}\big]\sigma_t^2  \:\bigg|\: \{\sigma_t^2\},L \bigg\}, \\
\sum_t \lambda_t =T.
\end{array}
\right.
\end{equation}
This problem can be solved by introducing a Lagrangian multiplier $\xi$, and formulate the optimization as:
\begin{align}
\lambda_t &= \underset{\{\lambda_t\}}{\min}\Bigg(\sum_{t=1}^T\lambda_t\bigg(1+\frac{\lambda_t}{\lambda_0}\bigg)\sigma_t^2  
- \xi \bigg[\sum_{t=1}^T \lambda_t - T\bigg]\Bigg).
\end{align}
For convenience, we define a new variable:
\begin{align}
q_t = \frac{\lambda_t}{\lambda_0} \leq 1,
\end{align}
then we can in turn focus on minimizing the following target function:
\begin{align}
\tilde{Q}_T &= \lambda_0\Bigg[\sum_{t=1}^T\sigma_t^2  q_t\big(q_t+1\big)
- \sum_{t=1}^T \xi q_t  + \frac{T\xi }{\lambda_0}\Bigg] \nonumber \\
&= \lambda_0\Bigg[\sum_{t=1}^T\sigma_t^2  \bigg(q_t+\frac{1}{2}\bigg)^2
- \sum_{t=1}^T \xi q_t  \Bigg] + \sum_{t=1}^T\bigg(\xi  - \frac{1}{4}\lambda_0\sigma_t^2\bigg).
\end{align}
The first order condition simply gives:
\begin{align}
2\sigma_t^2  \bigg(q_t+\frac{1}{2}\bigg) - \xi &= 0 \qquad \Longrightarrow\qquad q_t = \frac{1}{2}\bigg(\frac{\xi}{\sigma_t^2} - 1\bigg).
\end{align}
Substitute this into the average global constraint, we obtain:
\begin{align}
\sum_{t=1}^T \frac{\lambda_0}{2}\bigg(\frac{\xi}{\sigma_t^2} - 1\bigg) &= T \qquad \Longrightarrow\qquad \xi = \tilde{\sigma}^2\bigg(\frac{2}{\lambda_0} + 1\bigg),
\end{align}
where we define the inverse-averaged variance:
\begin{align}\label{S_invers_ave}
\tilde{\sigma}^2 =  \Bigg(\frac{1}{T}\sum_{t=1}^T \frac{1}{\sigma_t^2}\Bigg)^{-1}.
\end{align}
Then the solution without considering the bounded region constraint would be:
\begin{align}
\lambda_t = \bigg(1 + \frac{\lambda_0}{2}\bigg)\frac{\tilde{\sigma}^2}{\sigma_t^2} - \frac{\lambda_0}{2}.
\end{align}
Again consider the condition $max_{s,t}\big(|\sigma_t^2 - \sigma_s^2|\big) \ll \bar{\sigma}^2$, we can write $\sigma_t^2 = \bar{\sigma}^2 + \theta_t$, with $|\theta_t|\ll \bar{\sigma}$, therefore:
\begin{align}
\tilde{\sigma}^2 &=  \Bigg[\frac{1}{T}\sum_{t=1}^T \frac{1}{\bar{\sigma}^2 + \theta_t}\Bigg]^{-1} \simeq  \bar{\sigma}^2\Bigg[1 + \frac{1}{T}\sum_{t=1}^T \frac{\theta_t}{\bar{\sigma}^2}\Bigg].
\end{align}
Thus to the lowest order approximation, we have $\tilde{\sigma}^2\approx \bar{\sigma}^2$, and an approximating solution would be:
\begin{align}\label{S_sl1}
\lambda_t = \bigg(1 + \frac{\lambda_0}{2}\bigg)\frac{\bar{\sigma}^2}{\sigma_t^2} - \frac{\lambda_0}{2}.
\end{align}
Besides, since we assumed $max_{s,t}\big(|\sigma_t^2 - \sigma_s^2|\big) \ll \bar{\sigma}^2$, which indicates $|\lambda_t-1|\ll 1$, with a proper choice of $\alpha<\frac{1}{L}$, we always have $\gamma_t \leq\frac{1}{L}$. Therefore all assumptions mentioned in \textbf{Theorem \ref{S_them1_conv_lr}} are satisfied, and the convergence of $\mathbb{E}[S_T]$ is then ensured.

This completes the proof.
\end{proof}

\newpage
\section{Analysis and comments for the main text}
In this section, we would discuss several important details related to the implementation of the proposed regularization.

\subsection{Applicable bounded solution form}\label{S2.1}
In the main text, we mentioned a practically applicable form of the approximating solution would be:
\begin{align}\label{S_sl2}
\lambda_t  = \frac{1 + s}{1 + s \cdot \frac{\sigma_t^2}{\bar{\sigma}^2}}.
\end{align}
Now let us explain it as a bounded-approximation of \eqref{S_sl1}. 

The problem of \eqref{S_sl1}, as we mentioned in the main text and above, is that it is only valid under the following assumption:
\begin{align}
\max_{s,t}\big(|\sigma_t^2 - \sigma_s^2|\big) \ll \bar{\sigma}^2,
\end{align}
in which case we expect the bounded region constraint $\lambda_t \in [\lambda_{min}, \lambda_{max}]$ could be satisfied automatically. While in general cases, when $\sigma_t^2$ is too small, $\lambda_t$ would explode to large values, and when $\sigma_t^2$ is too large, $\lambda_t$ would be even negative. Therefore, we would like to regularize the solution \eqref{S_sl1} to bound it into a proper range.

There are different ways to bound a generic real number, and we choose the sigmoid function. More specifically, we consider the following transformation:
\begin{align}\label{S_sl3}
\bigg(1 + \frac{\lambda_0}{2}\bigg)\frac{\bar{\sigma}^2}{\sigma_t^2} - \frac{\lambda_0}{2}\quad \longrightarrow\quad
\frac{a}{1+\exp{\bigg[\frac{\lambda_0}{2} - \bigg(1+\frac{\lambda_0}{2}\bigg)\frac{\bar{\sigma}^2}{\sigma_t^2} \bigg]}}.
\end{align}
Note in the case where the heteroskedasticity is absent, i.e. $\sigma_t^2 \equiv \bar{\sigma}^2$, if we choose $a = (1+\frac{1}{e})$, the above expression suggests $\lambda_t = 1$, and our method goes back to the conventional one.
Now, although \eqref{S_sl3} can already be used the in calculation, in practice, to further simplify the expression, we would propose another approximating form. 

Reminded that the solution \eqref{S_sl1} itself is obtained under the condition \eqref{S_assumption}, we know the above bounded form \eqref{S_sl3} can also represent the true solution if \eqref{S_assumption} is true. In that case, again by writing $\sigma_t^2 = \bar{\sigma}^2 + \theta_t$, we can expand \eqref{S_sl3} as:
\begin{align}
\lambda_t \sim& \frac{a}{1+\exp{\bigg[\frac{\lambda_0}{2} - \bigg(1+\frac{\lambda_0}{2}\bigg)\frac{\bar{\sigma}^2}{\sigma_t^2} \bigg]}} \nonumber \\
\approx& \frac{a}{1+\exp{\bigg[\frac{\lambda_0}{2} - \bigg(1+\frac{\lambda_0}{2}\bigg)\bigg(1 - \frac{\theta_t}{\bar{\sigma}^2}\bigg)\bigg]}} \nonumber \\
\approx& \frac{a'}{e+\bigg[1 + \big(1+\frac{\lambda_0}{2}\big)\bigg(\frac{\sigma^2_t}{\bar{\sigma}^2}-1\bigg)\bigg]} \nonumber \\
=& \frac{1+ s}{1+ s\cdot \frac{\sigma^2_t}{\bar{\sigma}^2}}.
\end{align}
where $s = \bigg(\frac{1+\lambda_0/2}{e- \lambda_0/2}\bigg)$, and we set the numerator value to reach $\lambda_t=1$ in constant $\sigma_t^2$ case. This is exactly the bounded form we implemented in \eqref{S_sl2}. Besides, the above expression also suggests a regular range for the choice of $x$ in practice: since $\lambda_0\sim\frac{1}{\alpha_0L}$, we expect it is of order $\mathcal{O}(1)$ and can estimate the value with the range $(1,5)$; therefore we can estimate the value $x$ with the range $(0.5, 20)$.

\subsection{From $v^2_t$ to $\sigma_t^2$}\label{S2.2}
We assume the \textit{i.i.d.} nature for all data samples. In a mini-batch version algorithm, there are actually two sets of random variables and corresponding statistics: $\{\delta_{i,t}\}$ and $\{\bar{\delta}_t\}$. By definition:
\begin{align}
\bar{\delta}_t  = \frac{1}{m}\sum_{i=1}^m \delta_{i,t},
\end{align}
which means the variable $\bar{\delta}_t$ is the summation of $m$ iid variables. Then from the law of large number, we know for the first order moment:
\begin{align}
\mathbb{E}[\delta_{i,t}] = \mathbb{E}[\bar{\delta}_t] = \delta_t.
\end{align}
And from CLT, we also have:
\begin{align}
\sigma_0^2(x_t) \triangleq\mathbb{E}[(\delta_{i,t}-\delta_t)^2], \qquad \sigma_t^2 \triangleq \mathbb{E}[(\bar{\delta}_t-\delta_t)^2] = \frac{\sigma_0^2(x_t)}{m}.
\end{align}
Now, in addition, according to Cochran's theorem, we actually know that the following two second-order variables, which are also random variables before taken the expectation values, should obey distributions:
\begin{align}
(\delta_{i,t}-\bar{\delta}_t)^2 &\sim \frac{\sigma_0^2(x_t)}{m}\chi^2_{m-1}, \qquad (\bar{\delta}_t-\delta_t)^2 \sim \frac{\sigma_0^2(x_t)}{m}\chi^2_{1}
\end{align}
The first quantity represents the deviation of a single sample in the mini-batch from the sample-mean, and the second quantity represents the deviation of sample-mean from real mean.

As we mentioned in the main text, in practice we can only use the term $(\delta_{i,t}-\bar{\delta}_t)^2$, whose expectation value is:
\begin{align}
v_t^2 \triangleq \mathbb{E}[(\delta_{i,t}-\bar{\delta}_t)^2] = \frac{m-1}{m}\sigma_0^2(x_t) = (m-1)\sigma_t^2.
\end{align}
Therefore, the ratio with respect to the normal-averaged-variance $\bar{v}^2 \triangleq \frac{1}{T}\sum_tv_t^2$ would satisfy:
\begin{align}
\frac{v_t^2}{\bar{v}^2} = \frac{(m-1)\sigma_t^2}{(m-1)\bar{\sigma}^2}= \frac{\sigma_t^2}{\bar{\sigma}^2},
\end{align}
which makes the ratio $\frac{v_t^2}{\bar{v}^2}$ practically a legal choice.

\subsection{Scale-free variance}\label{S2.3}

We introduce the concept of \textit{scale-free variance} $\rho_t$ in the main text. Here we would discuss the idea and forms of $\rho_t$ for practical usage. Suppose at two different moments $t$ and $t'$, we accidentally find the following relation:
\begin{align}
\mathrm{g}_{i,t} = 2\mathrm{g}_{i,t'}, \quad \forall i\in[1,m].
\end{align}
Then we immediately have:
\begin{align}
v_t^2 = 4v^2_{t'},
\end{align}
which means the variance scales with the magnitude of gradients. If we directly use $\frac{v_t^2}{\bar{v}^2}$ to regularize the learning-rate, then we would have:
\begin{align}
\gamma_t < \gamma_{t'}.
\end{align}
However, if we review our intuition of the regularization, we notice that the "uncertainty" at both steps should be the same, which suggests us to use another variance term which is free from the scaling problem, and more truthfully represents and compares the uncertainty at different moments. The simplest solution is to normalize by the $\bar{g}_t^2$, i.e.
\begin{align}
\rho_t = \frac{v_t^2}{\bar{g}_t^2}.
\end{align}
Then, for example in the above case, we have $\rho_t =\rho_{t'}$, which is a desired result. This is the form we used in \textit{VR-SGD}.

\subsection{Scale-free variance in Adam and above}\label{S2.4}
In this section, we would derive the variance of a general increment term $\bar{\delta}_t$, which has different meaning in SGD, Heavy-Ball, Adam, AdaGrad, RMSprop, and so on. In some of these methods, the randomness enters the expression of $\bar{\delta}_t$ also through the denominator, such as Adam and RMSprop, which makes a rigorous derivation inaccessible for general cases discussion. To progress, we would make approximations and consider the lowest order expression for random variables in denominator. The approximating condition may not hold, of course, in general cases, but the derived expression would still be an applicable approximation.

Firstly, we consider a general gradient-momentum term in the following form:
\begin{align}
p_t = (1-\beta)\sum_{\tau}\beta^{t-\tau}\bar{\delta}_{\tau}.
\end{align}
We play the old trick by separate the mean-zero random variable $\epsilon_t$ at each moment, and obtain:
\begin{align}
p_t &= (1-\beta)\sum_{\tau}\beta^{t-\tau}\delta_{\tau} + (1-\beta)\sum_{\tau}\beta^{t-\tau}\epsilon_{\tau}, \nonumber \\
p^2_t &= \bigg((1-\beta)^2\sum_{\tau}\beta^{t-\tau}\delta_{\tau}\bigg)^2 + (1-\beta)^2\sum_{\tau,\tau'}\beta^{2t-\tau-\tau'}\epsilon_{\tau}\epsilon_{\tau'}+ (1-\beta)^2\sum_{\tau,\tau'}\beta^{2t-\tau-\tau'}\epsilon_{\tau}\delta_{\tau'}.
\end{align}
Taking the assumption that $\epsilon_t$ at different moments are independent to each other:
\begin{align}
\mathbb{E}[p_t] &= (1-\beta)\sum_{\tau}\beta^{t-\tau}\delta_{\tau}, \nonumber \\
\mathbb{E}[p^2_t] &= \bigg((1-\beta)^2\sum_{\tau}\beta^{t-\tau}\delta_{\tau}\bigg)^2 + (1-\beta)^2\sum_{\tau,\tau'}\beta^{2t-\tau-\tau'}\mathbb{E}[\epsilon_{\tau}\epsilon_{\tau'}] \nonumber \\
&= \bigg((1-\beta)^2\sum_{\tau}\beta^{t-\tau}\delta_{\tau}\bigg)^2 + (1-\beta)^2\sum_{\tau}(\beta^2)^{t-\tau}\sigma_{\tau}^2.
\end{align}
Therefore we obtain the variance should be:
\begin{align}
\sigma_{t}^2(p) = \frac{(1-\beta)^2}{(1-\beta^2)}\cdot \bigg[(1-\beta^2)\sum_{\tau}(\beta^2)^{t-\tau}\sigma_{\tau}^2\bigg].
\end{align}
And we can assign the following iteration formula to calculate this momentum-variance in practice:
\begin{align}
\tilde{\sigma}_{t+1}^2(p) &= \beta^2\tilde{\sigma}_{t}^2(p) + (1-\beta^2)\sigma_t^2,\nonumber \\
\sigma_{t+1}^2(p) &= \frac{(1-\beta)^2}{(1-\beta^2)}\cdot\tilde{\sigma}_{t+1}^2(p).
\end{align}

Secondly, we derive the lowest order approximation for the random variable in the denominator:
\begin{align}
\frac{1}{\bar{q}} &= \frac{1}{q + \epsilon} = \sum_{k=0}^{\infty}(-1)^k\frac{\epsilon^k}{q^{k+1}}.
\end{align}
In the case where $\frac{\epsilon}{q}\ll 1$, we can use the following approximation upto the second order:
\begin{align}
\frac{1}{\bar{q}} &\approx \frac{1}{q}\bigg( 1 - \frac{\epsilon}{q} + \frac{\epsilon^2}{q^2} \bigg), \nonumber \\
\frac{1}{\bar{q}^2} &\approx \frac{1}{q^2}\bigg( 1 - 2\frac{\epsilon}{q} + 3\frac{\epsilon^2}{q^2} \bigg).
\end{align}
Therefore we have the variance approximation as:
\begin{align}
\sigma^2\bigg(\frac{1}{\bar{q}}\bigg) &\approx \frac{\sigma^2}{q^4}.
\end{align}

With the above analysis and approximations in hand, we can now derive the scale-free variance in Adam now. Approximately, one can view the increment term in Adam as:
\begin{align}
\bar{\delta}_t \sim \frac{\bar{p}_{t|1}}{\sqrt{\bar{p}^2_{t|2}}} = \frac{\bar{p}_{t|1}}{|\bar{p}_{t|2}|},
\end{align}
where the two terms are updated in the following way:
\begin{align}
\bar{p}_{t+1|1} &= \beta^1\bar{p}_{t|1} + (1-\beta^1)\bar{g}_{t},\nonumber \\
\bar{p}^2_{t+1|2} &= \beta^2\bar{p}^2_{t|2} + (1-\beta^2)\bar{g}^2_{t}
\end{align}
We use the 2nd power notation in $\bar{p}^2_{t|2}$ to trace the order in term of $\bar{g}_t$. Note, according to the above updating rule, the rigorous way to consider the randomness would be:
\begin{align}
\bar{\delta}_t &\sim \frac{(1-\beta_1)\sum_{\tau}\beta_1^{t-\tau}(g_t + \epsilon_{\tau})}{\sqrt{(1-\beta_2)\sum_{\tau}\beta_2^{t-\tau}(g_{\tau} + \epsilon_{\tau})^2}}
\approx\frac{p_{t|1} + (1-\beta_1)\sum_{\tau}\beta_1^{t-\tau}\epsilon_{\tau}}{\sqrt{p^2_{t|2} + 2(1-\beta_2)\sum_{\tau}\beta_2^{t-\tau} g_{\tau}\epsilon_{\tau} + (1-\beta_2)\sum_{\tau}\beta_2^{t-\tau} \epsilon^2_{\tau}}}.
\end{align}
We want to calculate the variance of the above term, where for the denominator we would need a Taylor expansion as before. Since the variance is of the second order, during the calculation we would only keep terms upto the second order (after square):
\begin{align}
\frac{|\bar{p}_{t|2}|}{\bar{p}_{t|1}}\cdot\bar{\delta}_t &\approx \bigg[1 + \frac{(1-\beta_1)}{p_{t|1}}\sum_{\tau}\beta_1^{t-\tau}\epsilon_{\tau}\bigg]\cdot \bigg[1 + \frac{2(1-\beta_2)}{p^2_{t|2}}\sum_{\tau}\beta_2^{t-\tau} g_{\tau}\epsilon_{\tau}+ \frac{(1-\beta_2)}{p^2_{t|2}}\sum_{\tau}\beta_2^{t-\tau} \epsilon^2_{\tau}\bigg]^{-\frac{1}{2}}\nonumber \\
&\approx \bigg[1 + \epsilon_{t|1}\bigg]\cdot \bigg[1 - \frac{(1-\beta_2)}{p^2_{t|2}}\sum_{\tau}\beta_2^{t-\tau} g_{\tau}\epsilon_{\tau} - \frac{(1-\beta_2)}{2p^2_{t|2}}\sum_{\tau}\beta_2^{t-\tau} \epsilon^2_{\tau} + \frac{3(1-\beta_2)^2}{2p^4_{t|2}}\sum_{\tau, \tau'}\beta_2^{2t-\tau-\tau'} g_{\tau}g_{\tau'}\epsilon_{\tau}\epsilon_{\tau'}\bigg],\nonumber \\
\bigg(\frac{|\bar{p}_{t|2}|}{\bar{p}_{t|1}}\cdot\bar{\delta}_t\bigg)^2 &\approx \bigg[1 + \frac{(1-\beta_1)}{p_{t|1}}\sum_{\tau}\beta_1^{t-\tau}\epsilon_{\tau}\bigg]^2 \cdot \bigg[1 + \frac{2(1-\beta_2)}{p^2_{t|2}}\sum_{\tau}\beta_2^{t-\tau} g_{\tau}\epsilon_{\tau}\bigg]^{-1} \nonumber\\
&\approx \bigg[1 + \epsilon_{t|1}\bigg]^2 \cdot \bigg[1 - \frac{2(1-\beta_2)}{p^2_{t|2}}\sum_{\tau}\beta_2^{t-\tau} g_{\tau}\epsilon_{\tau} - \frac{(1-\beta_2)}{p^2_{t|2}}\sum_{\tau}\beta_2^{t-\tau} \epsilon^2_{\tau} + \frac{4(1-\beta_2)^2}{p^4_{t|2}}\sum_{\tau, \tau'}\beta_2^{2t-\tau-\tau'} g_{\tau}g_{\tau'}\epsilon_{\tau}\epsilon_{\tau'}\bigg],
\end{align}
where we defined for simplicity:
\begin{align}
\epsilon_{t|1} = \frac{(1-\beta_1)}{p_{t|1}}\sum_{\tau}\beta_1^{t-\tau}\epsilon_{\tau}.
\end{align}
Now we calculate the expectation value:
\begin{align}
\mathbb{E}\bigg[\frac{|\bar{p}_{t|2}|}{\bar{p}_{t|1}}\cdot\bar{\delta}_t\bigg]^2 &\approx \bigg[1 - \frac{(1-\beta_1)(1-\beta_2)}{p_{t|1}p^2_{t|2}}\sum_{\tau}(\beta_1\beta_2)^{t-\tau} g_{\tau}\sigma^2_{\tau} - \frac{(1-\beta_2)}{2p^2_{t|2}}\sum_{\tau}\beta_2^{t-\tau} \sigma^2_{\tau} + \frac{3(1-\beta_2)^2}{2p^4_{t|2}}\sum_{\tau}(\beta_2^2)^{t-\tau} g_{\tau}^2\sigma^2_{\tau}\bigg]^2 \nonumber \\
&\approx 1 - \frac{2(1-\beta_1)(1-\beta_2)}{p_{t|1}p^2_{t|2}}\sum_{\tau}(\beta_1\beta_2)^{t-\tau} g_{\tau}\sigma^2_{\tau} - \frac{(1-\beta_2)}{p^2_{t|2}}\sum_{\tau}\beta_2^{t-\tau} \sigma^2_{\tau} + \frac{3(1-\beta_2)^2}{p^4_{t|2}}\sum_{\tau}(\beta_2^2)^{t-\tau} g_{\tau}^2\sigma^2_{\tau},\nonumber \\
\mathbb{E}\bigg[\bigg(\frac{|\bar{p}_{t|2}|}{\bar{p}_{t|1}}\cdot\bar{\delta}_t\bigg)^2\bigg]&\approx 1 - \frac{4(1-\beta_1)(1-\beta_2)}{p_{t|1}p^2_{t|2}}\sum_{\tau}(\beta_1\beta_2)^{t-\tau} g_{\tau}\sigma^2_{\tau} - \frac{(1-\beta_2)}{p^2_{t|2}}\sum_{\tau}\beta_2^{t-\tau} \sigma^2_{\tau} + \frac{4(1-\beta_2)^2}{p^4_{t|2}}\sum_{\tau}(\beta_2^2)^{t-\tau} g_{\tau}^2\sigma^2_{\tau} \nonumber \\
&\quad + \frac{(1-\beta_1)^2}{p^2_{t|1}}\sum_{\tau}(\beta_1^2)^{t-\tau}\sigma^2_{\tau}.
\end{align}
Therefore we obtain the variance term in the following form:
\begin{align}
Var\big(\bar{\delta}_t\big) \approx &
\frac{1}{p^2_{t|2}}\bigg[(1-\beta_1)^2\sum_{\tau}(\beta_1^2)^{t-\tau}\sigma^2_{\tau} 
 + \frac{p_{t|1}^2}{p^2_{t|2}}(1-\beta_2)^2\sum_{\tau}(\beta_2^2)^{t-\tau} \bigg(\frac{g^2_{\tau}}{p^2_{t|2}}\bigg)\sigma^2_{\tau}\nonumber \\
 &\qquad- 2\frac{p_{t|1}}{|p_{t|2}|}(1-\beta_1)(1-\beta_2)\sum_{\tau}(\beta_1\beta_2)^{t-\tau}\bigg(\frac{g_{\tau}}{|p_{t|2}|}\bigg)\sigma^2_{\tau}\bigg] \nonumber \\
 = & 
 \frac{1}{p^2_{t|2}}\bigg[(1-\beta_1)^2\sum_{\tau}(\beta_1^2)^{t-\tau}\sigma^2_{\tau} 
 + \delta^2_t(1-\beta_2)^2\sum_{\tau}(\beta^2_2)^{t-\tau} \bigg(\frac{g^2_{\tau}}{p^2_{t|2}}\bigg)\sigma^2_{\tau}\nonumber \\
 &\qquad- 2\delta_t(1-\beta_1)(1-\beta_2)\sum_{\tau}(\beta_1\beta_2)^{t-\tau}\bigg(\frac{g_{\tau}}{|p_{t|2}|}\bigg)\sigma^2_{\tau}\bigg].
\end{align}
On the one hand, this formula can be directly used in the algorithm. On the other hand, however, the two summed terms: $\frac{g_{\tau}}{|p_{t|2}|}(\beta_1\beta_2)^{t-\tau}\sigma^2_{\tau} $ and $ \frac{g^2_{\tau}}{p^2_{t|2}}(\beta_2)^{t-\tau}\sigma^2_{\tau}$, may cost extra calculations. And to simplify the implementation, one can focus on the orders of $\bar{g}_t$, and approximate the above expression as:
\begin{align}
Var\big(\bar{\delta}_t\big) \approx & 
 \frac{1}{p^2_{t|2}}\bigg[
 \sigma_{t|1}^2 - 2\delta_t\langle\epsilon_{t|1}\epsilon_{t|2}\rangle + \delta_t^2\sigma_{t|2}^2
 \bigg],
\end{align}
where we defined the following variance-like terms:
\begin{align}
\sigma_{t|1}^2 &= (1-\beta_1)^2\sum_{\tau}(\beta_1^2)^{t-\tau}\sigma^2_{\tau}, \nonumber \\
\sigma_{t|2}^2 &= (1-\beta_2)^2\sum_{\tau}(\beta^2_2)^{t-\tau} \sigma^2_{\tau}, \nonumber \\
\langle\epsilon_{t|1}\epsilon_{t|2}\rangle &= (1-\beta_1)(1-\beta_2)\sum_{\tau}(\beta_1\beta_2)^{t-\tau}\sigma^2_{\tau}.
\end{align}
The three above expressions can be easily calculated as the momentum of the variance given the series $\{\sigma_{\tau}^2\}_{\tau=1}^t$, as long as we take the decaying parameter as $\beta = \{\beta_1^2, \beta_2^2, , \beta_1\beta_2\}$ respectively:
\begin{align}
u_t &= \beta_1^2 u_{t-1} + (1-\beta_1^2)\sigma_t, \nonumber \\
v_t &= \beta_1^2 v_{t-1} + (1-\beta_2^2)\sigma_t, \nonumber \\
w_t &= \beta_1\beta_2 w_{t-1} + (1-\beta_1\beta_2)\sigma_t.
\end{align}
Then we can obtain the variance-like terms as:
\begin{align}
\sigma_{t|1}^2 &= \frac{(1-\beta_1)^2}{(1-\beta_1^2)}u_t, \nonumber \\
\sigma_{t|2}^2 &= \frac{(1-\beta_2)^2}{(1-\beta_2^2)}v_t, \nonumber \\
\langle\epsilon_{t|1}\epsilon_{t|2}\rangle &= \frac{(1-\beta_1)(1-\beta_2)}{(1-\beta_1\beta_2)}w_t,
\end{align}
which is the implemented algorithm mentioned in the main text. And in practice, compared with conventional Adam, these three terms are the only extra terms required to be calculated.

Note that the above calculation, with/without approximations, can be easily extended to other current algorithms, e.g. Momentum-SGD, AdaGrad, RMSprop and so on.

\end{document}